\documentclass{article}

\usepackage{microtype}
\usepackage{graphicx}
\usepackage{subcaption}
\usepackage{booktabs} % for professional tables

\usepackage{hyperref}

\usepackage[preprint]{icml2026}

\usepackage{amsmath}
\usepackage{amssymb}
\usepackage{mathtools}
\usepackage{amsthm}

\usepackage{booktabs}
\usepackage{multirow}
\usepackage{tabularx}
\usepackage{adjustbox}
\usepackage{subcaption}

\usepackage[capitalize,noabbrev]{cleveref}

\theoremstyle{plain}
\newtheorem{theorem}{Theorem}[section]

\newtheorem{lemma}[theorem]{Lemma}

\theoremstyle{definition}

\theoremstyle{remark}

\usepackage[most]{tcolorbox}

\usepackage[textsize=tiny]{todonotes}

\icmltitlerunning{Robust Heterogeneous Analog-Digital Computing for Mixture-of-Experts Models with Theoretical Generalization Guarantees}

\begin{document}

\twocolumn[
  \icmltitle{Robust Heterogeneous Analog-Digital Computing for Mixture-of-Experts Models with Theoretical Generalization Guarantees}

  % It is OKAY to include author information, even for blind submissions: the
  % style file will automatically remove it for you unless you've provided
  % the [accepted] option to the icml2026 package.

  % List of affiliations: The first argument should be a (short) identifier you
  % will use later to specify author affiliations Academic affiliations
  % should list Department, University, City, Region, Country Industry
  % affiliations should list Company, City, Region, Country

  % You can specify symbols, otherwise they are numbered in order. Ideally, you
  % should not use this facility. Affiliations will be numbered in order of
  % appearance and this is the preferred way.
  \icmlsetsymbol{equal}{*}

  \begin{icmlauthorlist}
    \icmlauthor{Mohammed Nowaz Rabbani Chowdhury}{yyy}
    \icmlauthor{Hsinyu Tsai}{comp}
    \icmlauthor{Geoffrey W. Burr}{comp}
    \icmlauthor{Kaoutar El Maghraoui}{comp}
    \icmlauthor{Liu Liu}{yyy}
    \icmlauthor{Meng Wang}{yyy}
  \end{icmlauthorlist}

  \icmlaffiliation{yyy}{Rensselaer Polytechnic Institute}
  \icmlaffiliation{comp}{IBM Research}

  \icmlcorrespondingauthor{Mohammed Nowaz Rabbani Chowdhury}{chowdm2@rpi.edu}
  \icmlcorrespondingauthor{Meng Wang}{wangm7@rpi.edu}

  % You may provide any keywords that you find helpful for describing your
  % paper; these are used to populate the "keywords" metadata in the PDF but
  % will not be shown in the document
  \icmlkeywords{Analog Compute-In-Memory, Mixture-of-Experts, Theoretical Guarantees}

  \vskip 0.3in
]

% this must go after the closing bracket ] following \twocolumn[ ...

% This command actually creates the footnote in the first column listing the
% affiliations and the copyright notice. The command takes one argument, which
% is text to display at the start of the footnote. The \icmlEqualContribution
% command is standard text for equal contribution. Remove it (just {}) if you
% do not need this facility.

% Use ONE of the following lines. DO NOT remove the command.
% If you have no special notice, KEEP empty braces:
\printAffiliationsAndNotice{}  % no special notice (required even if empty)
% Or, if applicable, use the standard equal contribution text:
% \printAffiliationsAndNotice{\icmlEqualContribution}

\begin{abstract}
  Sparse Mixture-of-Experts (MoE) models enable efficient scalability by activating only a small subset of experts per input, yet their massive parameter counts lead to substantial memory and energy inefficiency during inference. Analog in-memory computing (AIMC) offers a promising solution by eliminating frequent data movement between memory and compute units. However, mitigating hardware nonidealities of AIMC typically requires noise-aware retraining, which is infeasible for large MoE models. In this paper, we propose a retraining-free heterogeneous computation framework in which noise-sensitive experts, which are provably identifiable by their \textit{maximum neuron norm}, are computed digitally while the majority of the experts are executed on AIMC hardware. We further assign densely activated modules, such as attention layers, to digital computation due to their high noise sensitivity despite comprising a small fraction of parameters. Extensive experiments on large MoE language models, including DeepSeekMoE and OLMoE, across multiple benchmark tasks validate the robustness of our approach in maintaining accuracy under analog nonidealities.
\end{abstract}

\section{Introduction}

The sparse computation of Mixture-of-Experts (MoE) models allows efficient scaling of large language and vision models without the proportional increase of training compute \cite{shazeer2017outrageously,riquelme2021scaling,fedus2022switch,chowdhury2023patch}. For each input token, the model activates only a small subset of group of neurons in the feed forward network (FFN) modules of Transformer. Each of the group is referred to as an \textit{expert}. Due to their excellent scalability, modern MoE-based large language models (LLMs) have grown to disproportionately large parameter sizes \cite{jiang2024mixtral,guo2025deepseek,yang2025qwen3}, which in turn require substantial memory to operate and result in significant energy inefficiency while computing in digital accelerators \cite{frantar2024qmoe,chowdhury2024a,jegham2025hungry,fernandez2025energy}.

The analog in-memory computing (AIMC) provides a promising solution to the inefficiencies of MoE \cite{buchel2025efficient}. AIMC allows to compute the matrix-vector multiplication (MVM) inside the non-volatile memory (NVM) devices, thus eliminating the need for frequent movement of large expert parameters between the memory and compute units, which is the major source of energy inefficiency \cite{burr2017neuromorphic,joshi2020accurate,sebastian2020memory,fujiwara202434}. However, MVM operations on AIMC hardware are inherently approximate due to various nonidealities, such as noise introduced by digital-to-analog and analog-to-digital conversions (DAC and ADC) and inaccuracies in programming weights onto NVM devices \cite{hou2025nora}. These nonidealities can lead to substantial degradation in model performance. A common strategy to alleviate this degradation is noise-aware retraining, which improves robustness to AIMC-induced noise \cite{rasch2023hardware,buchel2025analog}. However, the large parameter counts of modern MoE models renders such retraining approaches impractical for AIMC deployment.

For a robust retraining-free deployment of large MoE models in AIMC requires a heterogeneous computing approach, where the most sensitive components (e.g., noise-sensitive experts) are computed in digital accelerators, and the rest of the model components are computed in AIMC devices. However, a systematic approach is required to identify the noise-sensitive model components for the pareto-optimal heterogeneous deployment. This raises a fundamental questions:
\begin{center}
\textit{How to identify the noise-sensitive model components for the heterogeneous Digital-AIMC deployment of MoE?}
\end{center}

\begin{figure*}[t]
    \centering
    \includegraphics[width=\textwidth]{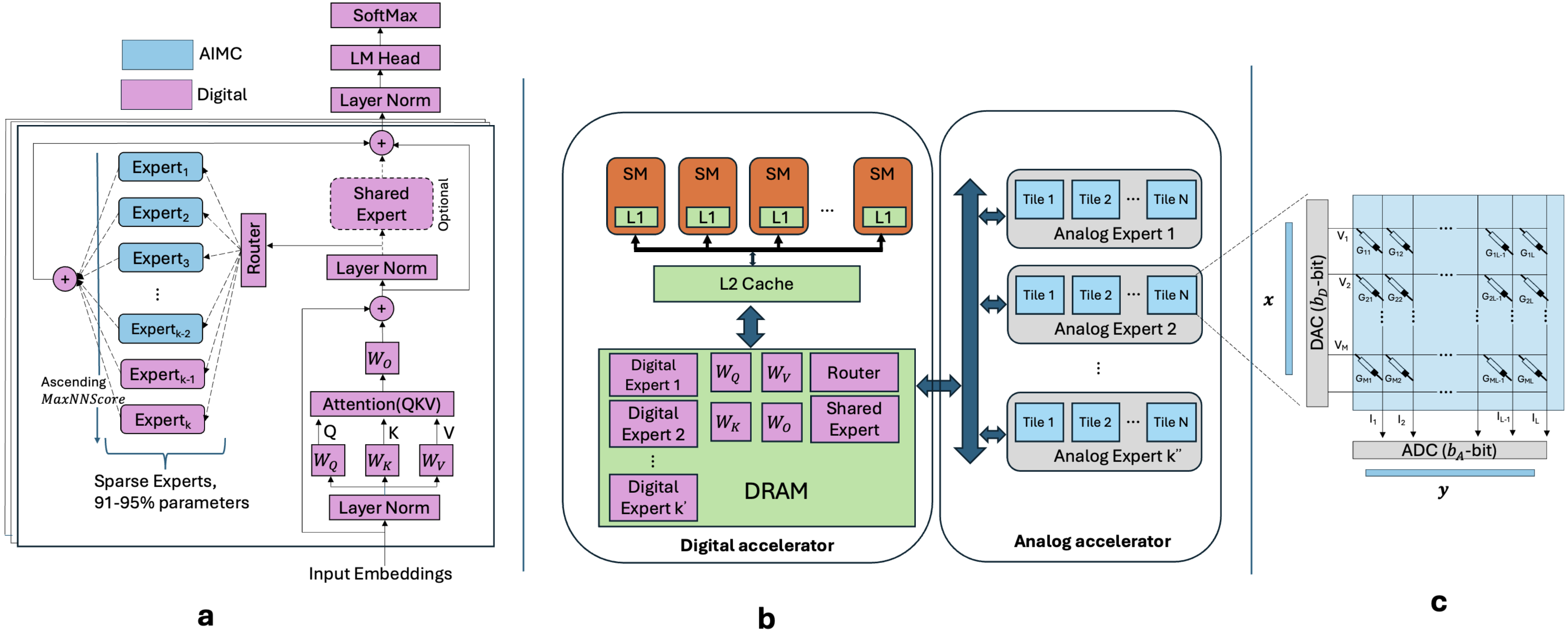}
    \caption{\textbf{a.} In heterogeneous computing of MoE, the dense modules and noise-sensitive expert modules are computed in a digital accelerator while rest of the experts are computed in an analog accelerator. \textbf{b.} A schematic of digital and analog accelerator. \textbf{c.} The analog accelerator is comprised of non-volatile memory (NVM) tiles. Weights are programmed to the crossbar array of the tile.}
    \label{fig_diag}
\end{figure*}

In this paper, we address the question both theoretically and empirically. Our theoretical analysis reveals that the experts composed of neurons with large $\ell_2$-norms are more susceptible to weight-programming noise in AIMC. We therefore propose to compute these experts in digital, while computing rest of the experts in AIMC devices. We further propose to compute the dense model components (e.g., the multi-head self-attention modules, the language modeling head, and the shared experts (if any) of MoE-LLMs) in digital due to their high per-parameter sensitivity to analog noises. We summarize our main contributions as follows:

1. \textbf{The first analog noise-sensitivity analysis of large MoE models.} To the best of our knowledge, this paper provides the first systematic sensitivity analysis of different model components in large MoE models under the two major sources of AIMC nonidealities: the DAC-ADC noise, and the weight-programming noise. Our analysis reveals that a subset of experts exhibits pronounced performance degradation as weight-programming noise increases, suggesting the computation of these experts in digital. Furthermore, we show that dense model components in MoE—despite accounting for only a small portion of the total parameters—are highly sensitive to both DAC–ADC and weight-programming noise due to their involvement in processing all input tokens.

2. \textbf{A theoretically grounded metric—the expert \textit{maximum neuron norm score}—for digital expert selection.} We theoretically prove that, the experts specializing in frequently appearing important tokens in data contain neurons with large $\ell_2$ norms, hence demonstrate high sensitivity to weight-programming noise. We therefore propose to compute the experts with large \textit{maximum neuron norm scores}—defined as the product of the maximum neuron $\ell_2$ norms across all linear projection layers within an expert—in digital.

3. \textbf{Empirical validation of the proposed heterogeneous computing approach on large MoE models.} We empirically evaluate the proposed heterogeneous computing framework on large-scale MoE models, including DeepSeekMoE \cite{dai2024deepseekmoe} (16B parameters) and OLMoE \cite{muennighoff2025olmoe} (7B parameters), across a diverse set of benchmark LLM tasks. The results demonstrate that our maximum neuron norm–based digital expert selection consistently outperforms prior expert selection strategies proposed in the MoE literature, demonstrating robustness in maintaining accuracy under imperfect analog computation.

\subsection{Related Works}

\textbf{Analog in-memory computing of large models.} The evaluation of in-memory computing performance is largely limited to smaller size models of different architectures such as CNNs \cite{joshi2020accurate,gokmen2019marriage}, RNNs \cite{rasch2023hardware}, LSTMs\cite{tsai2019inference}, and encoder only Transformers \cite{spoon2021toward}. \cite{pmlr-v235-zhang24i} minimize DAC-ADC noise for LLMs through activation-shifting and weight-reshaping. \cite{buchel2025analog} conduct large scale noise-aware retraining for foundation LLMs. \cite{hou2025nora} conduct sensitivity analysis of LLMs on various analog noise sources and proposes a training-free rescaling method to mitigate the noise. In all of the cases, the model sizes (1-3B) are well below typical MoE parameter ranges.

\textbf{Efficient inference of Mixture-of-Experts.} Due to the large memory requirements of MoE, inference cost reduction has drawn much attention recently \cite{koishekenov2023memory, li2024examining, chowdhury2024a, frantar2024qmoe, huang2025mixture}. Most of the works focus on expert pruning and quantization to mitigate inference cost. \cite{buchel2025efficient} demonstrate improved scalability of MoE in 3D AIMC architecture compared to dense models.

\textbf{Theoretical generalization guarantees of neural networks.} Due to high non-convexity, the theoretical generalization analysis of neural networks is mostly limited to two-layers \cite{chowdhury2023patch, bu2024provably, li2025task}. Feature learning dynamics is widely used to gain insights about different training and inference properties \cite{li2024nonlinear, li2025training}. Most works require simplified data model to facilitate analysis \cite{brutzkus2018sgd,karp2021local,shi2022a}.

\section{Background}

\subsection{The Mixture-of-Experts Architecture}

An MoE block replaces the dense feed-forward network (FFN) of a Transformer layer with a set of sparsely activated FFN modules, referred to as \textit{experts}. A routing network determines which experts are activated for each input token, enabling conditional computation. Common routing strategies include \emph{token-choice routing} \cite{fedus2022switch}, where each token selects its top experts, and \emph{expert-choice routing} \cite{zhou2022mixtureofexperts}, where each expert selects a subset of tokens.

For an MoE block with $k$ experts, each expert $s \in [k]$ is implemented as a two-layer MLP or gated MLP with projection matrices $\mathbf{W}_{\mathrm{up}}^{(s)} \in \mathbb{R}^{d \times m}$ and $\mathbf{W}_{\mathrm{down}}^{(s)} \in \mathbb{R}^{m \times d}$. In the gated-MLP variant, an additional gate-projection matrix $\mathbf{W}_{\mathrm{gate}}^{(s)} \in \mathbb{R}^{d \times m}$ is used. The routing network is parameterized by a routing matrix $\mathbf{\Sigma} \in \mathbb{R}^{d \times k}$.

Given an input token $\mathbf{x} \in \mathbb{R}^d$, the routing network computes routing scores $\mathbf{x}^\top\mathbf{\Sigma}$. A sparse subset of experts is activated according to the chosen routing strategy and the input $\mathbf{x}$, and the output token is computed as $\mathbf{x}_{\mathrm{out}} =
\sum_{s \in \mathcal{A}}
\mathbf{f}_s(\mathbf{x})$, where $\mathcal{A} \subseteq [k]$ denotes the set of activated experts and $\mathbf{f}_s(\cdot)$ is the expert function.

For a standard MLP expert,
\begin{equation}
\mathbf{f}_s(\mathbf{x}) =
G^{(s)}
\big[\phi(\mathbf{x}^\top\mathbf{W}_{\mathrm{up}}^{(s)})
\mathbf{W}_{\mathrm{down}}^{(s)}\big]^\top,
\end{equation}
and for a gated-MLP expert,
\begin{equation}
\mathbf{f}_s(\mathbf{x}) =
G^{(s)}
\big[\big(
\phi(\mathbf{x}^\top\mathbf{W}_{\mathrm{up}}^{(s)})
\odot
(\mathbf{x}^\top\mathbf{W}_{\mathrm{gate}}^{(s)})
\big)
\mathbf{W}_{\mathrm{down}}^{(s)}\big]^\top.
\end{equation}

Here, $\phi(\cdot)$ denotes an element-wise activation function, $\odot$ denotes the Hadamard product, and $G^{(s)}$ is the routing weight for expert $s$, obtained by normalizing routing scores as specified by the routing mechanism.

\subsection{Analog In-Memory Computing of Neural Networks}\label{aimc_descrption}

In the analog in-memory computing (AIMC) paradigm, the matrix-vector multiplication (MVM) of a linear layer is performed in non-volatile memory (NVM) devices (e.g., PCM \cite{joshi2020accurate}, ReRAM \cite{wan2022compute}), where weights are programmed as NVM cell conductances. Digital inputs are converted to analog signals via a digital-to-analog converter (DAC), and the resulting column currents, representing the analog output, are converted back to digital by an analog-to-digital converter (ADC) for subsequent digital operation.

The MVM operation in AIMC accelerators is not exact. The nonidealities arises from the weight programming noise and other system-level nonidealities, as well as the quantization noise in DAC and ADC. The programming noise and the DAC-ADC noise   dominate over other system-level noises. 

\textbf{The weight programming noise.} The weight programming noise arises from the imprecise programming of weights in NVM devices.  \cite{le202364} proposes a mathematical model to characterize this noise, as shown in  (\ref{eq_prog_noise_model}), 
\begin{equation}\label{eq_prog_noise_model}
    \hat{W}_{i,j}=W_{i,j}+\mathcal{N}(0,\sigma_{i,j}^2), \sigma_{i,j}=c_0W_{max}+\sum_{u=1}^{3}c_u\frac{|W_{i,j}|^u}{W_{max}^{u-1}}
\end{equation}
Here, $W_{i,j}$ is the $j$-th element of the column $i$ in the NVM tile, $W_{max}$ is the maximum weight-magnitude of the column. This model shows that the  noisy weight   $\hat{W}_{i,j}$ is the sum of the original weight $W_{i,j}$ and a zero-mean Gaussian Gaussian noise that depend on $W_{i,j}$  and $W_{max}$.  For example,  \cite{le202364} fits (\ref{eq_prog_noise_model}) using the data obtained from a state-of-the-art 64-core PCM-based AIMC chip, and obtain  $c_0=0.012, c_1=0.245, c_2=-0.54, c_3=0.40$ if $W_{i,j}>0.292\times W_{max}$, and $c_0=0.014, c_1=0.224, c_2=-0.72, c_3=0.952$ otherwise.

\textbf{DAC-ADC noise.} DAC noise originates from quantizing floating-point digital inputs (typically FP16) to $b_D$-bit integers representing discrete analog levels. For a fixed input range $\beta_{\mathrm{in}}$, the quantized input is
\begin{equation}\label{eq_dac}
x_q := \frac{\beta_{\mathrm{in}}}{2^{b_D-1}-1} \Big\lfloor \text{clamp}(x, -\beta_{\mathrm{in}}, \beta_{\mathrm{in}}) \frac{2^{b_D-1}-1}{\beta_{\mathrm{in}}} \Big\rceil,
\end{equation}
where $\text{clamp}(x,a,b)=\min (\max (x,a),b)$.
ADC noise arises from quantizing the continuous column currents to $b_A$-bit digital outputs. For a fixed output range $\beta_{\mathrm{out}}$, the $i$-th column output $y^{(i)}$ of the tile is quantized to
\begin{equation}\label{eq_adc}
\begin{aligned}
y_q^{(i)} &:= \text{clamp}\Big(
\frac{\beta_{\mathrm{out}}}{2^{b_A-1}-1} 
\Big\lfloor y^{(i)} \frac{2^{b_A-1}-1}{\beta_{\mathrm{out}}} \Big\rceil,
-\beta_{\mathrm{out}}, \beta_{\mathrm{out}}
\Big),\\
\beta_{\mathrm{out}} &:= \lambda \beta_{\mathrm{in}} \max(|\mathbf{W}_{:,i}|),
\end{aligned}
\end{equation}
where $\mathbf{W}_{:,i}$ are the column weights and $\lambda$ is a tunable hyperparameter.

\textbf{DAC-ADC calibration.}
In practice, we can mitigate DAC-ADC noise by calibrating the hyperparameters $\beta_{\mathrm{in}}$ in (\ref{eq_dac}) and $\lambda$ in (\ref{eq_adc}). For each NVM tile, we set $\beta_{\mathrm{in}} = \kappa \cdot \mathrm{std}(x)$, where $\mathrm{std}(x)$ is the exponential moving average of the input standard deviation over a calibration dataset. The global hyperparameters $\kappa$ and $\lambda$ are then calibrated across all tiles over the dataset.

\iffalse 
\textbf{The weight programming noise.} The weight programming noise arises from the imprecise programming of weights in NVM devices. We consider a realistic programming noise model, fitted on the data obtained from a state-of-the-art 64-core PCM-based AIMC chip \cite{le202364}. The noise model represents the noisy weights by adding a weight-magnitude depended Gaussian component to the original weight as given by (\ref{eq_prog_noise_model}).
\begin{equation}\label{eq_prog_noise_model}
    \hat{W}_{i,j}=W_{i,j}+\mathcal{N}(0,\sigma_{i,j}^2), \sigma_{i,j}=c_0W_{max}+\sum_{u=1}^{3}c_u\frac{|W_{i,j}|^u}{W_{max}^{u-1}}
\end{equation}
Here, $W_{i,j}$ is the $j$-th element of the column $i$ in the NVM tile, $W_{max}$ is the maximum weight-magnitude of the column, $c_0=0.012, c_1=0.245, c_2=-0.54, c_3=0.40$ if $W_{i,j}>0.292\times W_{max}$, and $c_0=0.014, c_1=0.224, c_2=-0.72, c_3=0.952$ otherwise.
\fi

\section{The Proposed Heterogeneous Computation of MoE}

Before we describe the our method of heterogenous computation of MoE, we first introduce a new metric that we will use to select experts.

\textbf{Maximum neuron norm score -- a metric to select experts}.  %We design a digital expert selection strategy, where the experts with larger maximum neuron norm to be computed in digital. 
For a projection matrix $\mathbf{W} \in \mathbb{R}^{d \times m}$ with $m$ neurons, we define the maximum neuron norm of this matrix $\mathbf{W}$ as 
\begin{equation}
\mathrm{MaxNNorm}(\mathbf{W}) :=
\max_{i \in [m]} \lVert \mathbf{W}_{:,i} \rVert_2,
\end{equation}
where $\mathbf{W}_{:,i} \in \mathbb{R}^d$ denotes the weight vector of neuron $i$.

For expert $s$, we define the \textit{maximum neuron norm score} of   a gated-MLP expert $s$ as the product of the maximum neuron norms of  three matrices, $\mathbf{W}_{\textrm{up}}^{(s)}$, $\mathbf{W}_{\textrm{down}}^{(s)}$, and $\mathbf{W}_{\textrm{gate}}^{(s)}$,    
\begin{equation}\label{eqn:metric}
\mathrm{MaxNNScore}^{(s)} :=
\prod_{* \in \{\mathrm{up,down,gate}\}}
\mathrm{MaxNNorm}(\mathbf{W}_*^{(s)}). 
\end{equation}
where the gate term in (\ref{eqn:metric}) is omitted when defining the maximum neuron norm score for a standard MLP expert $s$. %Within each MoE block, experts are ranked in descending order of $\mathrm{MaxNNScore}^{(s)}$, and the top $\Gamma$ fraction are selected for digital computation.

 Our   \textbf{heterogeneous computation strategy for MoE} is summarized in Figure~\ref{fig:hetero-moe}.   The idea is  to assign all densely activated modules and the top $\Gamma$ fraction of experts (ranked by the maximum neuron norm score) to digital computation, with the remaining experts’ linear modules computed on the AIMC accelerator.

\begin{figure}[t]
\centering
\begin{tcolorbox}[
    enhanced,
    colback=white,
    colframe=black,
    arc=20pt,
    boxrule=0.8pt,
    center title,
    width=0.99\linewidth,
    title=\textbf{Heterogeneous Computation of MoE}
]
\textbf{Step 1.} Place all densely activated modules on the digital accelerator.

\textbf{Step 2.} Rank experts in each MoE block by the maximum neuron norm score in (\ref{eqn:metric}) from high to low.

\textbf{Step 3.} Assign the top $\Gamma$ fraction experts to the digital accelerator. Compute the remaining experts’ linear modules on the AIMC chip.
\end{tcolorbox}
\caption{The   heterogeneous computation strategy for MoE models.  }
\label{fig:hetero-moe}
\end{figure}

\textbf{Rationale behind computing dense modules in digital.} 
The number of parameters in the densely activated modules (e.g., multi-head self-attention (MHSA), any shared expert, and LM head in MoE-LLM) is insignificant. Typically, 5-6\% parameters resides in these modules in total. On the other hand, the processing of all input tokens by these modules indicates high sensitivity to AIMC noise. Computing these modules in AIMC suggests a large drop in model performance for a minimal gain in efficiency. We therefore propose to compute them in digital.

\textbf{Rationale for computing programming-noise-sensitive experts in digital.} Unlike DAC-ADC noise, which can be mitigated via higher bit precision ($b_D$, $b_A$) and calibration of $\beta_{\mathrm{in}}$ and $\lambda$, weight programming noise varies across AIMC devices and cannot be controlled after fabrication. Selecting $\Gamma$ fraction of experts for digital computation based on their programming-noise sensitivity allows flexible trade-offs between efficiency and model performance across devices.

One major contribution of this paper is that we theoretically justify that the maximum neuron norm score is a valid metric for an expert’s sensitivity to weight-programming noise: a larger score indicates higher sensitivity, motivating placement on the digital accelerator. The formal theoretical analysis will be represented in Section \ref{sec:thm}.  

\iffalse
\textbf{The proposed digital expert selection method.} We design a digital expert selection strategy, where the experts with larger maximum neuron norm to be computed in digital. For a projection matrix $\mathbf{W} \in \mathbb{R}^{d \times m}$ with $m$ neurons, we define
\begin{equation}
\mathrm{MaxNNorm}(\mathbf{W}) :=
\max_{i \in [m]} \lVert \mathbf{W}_{:,i} \rVert_2,
\end{equation}
where $\mathbf{W}_{:,i} \in \mathbb{R}^d$ denotes the weight vector of neuron $i$.

For expert $s$, we define the \textit{maximum neuron norm score} as
\begin{equation}
\mathrm{MaxNNScore}^{(s)} :=
\prod_{* \in \{\mathrm{up,down,gate}\}}
\mathrm{MaxNNorm}(\mathbf{W}_*^{(s)}),
\end{equation}
where the gate term is omitted for standard MLP experts. Within each MoE block, experts are ranked in descending order of $\mathrm{MaxNNScore}^{(s)}$, and the top $\Gamma$ fraction are selected for digital computation.

\fi 

\section{Theoretical Support of Experts Selection}\label{sec:thm}

\subsection{Key Theoretical Insights}
To support our metric for expert selection, 
we analyze the training dynamics of a simpler   MoE model to characterize the properties of different experts and their resulting generalization.
 We consider an analytical framework where an MoE model is trained on a binary sequence classification task. The class label is determined by a task-relevant token in the sequence. There are two class-relevant tokens under each class. One of them appears more frequently than other in the sequences. We denote the frequency of the \textit{less-frequent} task-relevant token by $\alpha$, i.e., the \textit{more-frequent} task-relevant token appears with frequency $(1-\alpha)$. Before presenting formal theorems, here we summarize our key theoretical insights.

\textbf{1. Experts specialized in learning more-frequent tokens have  neurons with large magnitude.} We theoretically show that the experts which are specialized in more-frequent tokens hosts neurons with large $l_2$ norm. Therefore, these experts are more sensitive to weight-programming noise of AIMC devices.

\textbf{2. Selecting experts with large maximum neuron norm for digital computation improves noise tolerance.} Our analysis reveals that selecting experts with large maximum neuron norm, i.e., large value of $\operatorname{MaxNNScore}$ for digital computation allows the remaining analog experts to tolerate higher programming noise. Specifically, selecting a fraction of experts with top $\operatorname{MaxNNScore}$ that are specialized in more-frequent tokens allows the remaining experts to tolerate $\Omega(\frac{1-\alpha}{\alpha})$ times higher noise magnitude compared with the tolerable noise magnitude when all experts are computed in analog.

\subsection{Analytical Setup}

We adopted the same theoretical setup as in \cite{chowdhury2026efficient}. We briefly describe the setup here.

\textbf{Analytical model and the learning task.} We analyze a simplified MoE model of a single MoE block of standard MLP, trained on a binary sequence classification task. Given a sequence of tokens $\mathbf{X}\in\mathbb{R}^{d\times n}$, where each column $j\in[n]$ represents a token $\mathbf{x}^{(j)}\in\mathbb{R}^d$. The sequence label is denoted by $y\in\{+1,-1\}$. The model output is given by,
\begin{equation}\label{th_model}
    f(\mathbf{X}):=\cfrac{1}{d}\sum_{j\in[n]}\mathbf{1}^\top\mathbf{x}_{\operatorname{out}}^{(j)}
\end{equation}
where $\mathbf{x}_{\operatorname{out}}^{(j)}$ is the $j$-th output token of the MoE block and $\mathbf{1} \in \mathbb{R}^d$ is the all-ones vector. Each expert's down-projection matrix is fixed throughout the training and given by $\mathbf{W}_{\operatorname{down}}^{(s)}=a^{(s)}\mathbf{1}^{m\times d}$, where $\mathbf{1}^{m\times d}$ is an all-ones matrix and $a^{(s)}\in\{+1,-1\}$. We consider expert-choice routing, where for each expert $s$, the tokens corresponding to the top-$l$ entries of $[\mathbf{X}^\top \mathbf{\Sigma}]_{:,s}$ are routed to that expert. The routing weight of token $j$, routed to expert $s$, is evaluated as,
\begin{equation}
    G_j^{(s)}=\exp([\mathbf{X}^\top\mathbf{\Sigma}]_{j,s})/\sum_{i\in J_s(\mathbf{X})}\exp([\mathbf{X}^\top\mathbf{\Sigma}]_{i,s})
\end{equation}
where, $J_s(\mathbf{X})$ is the index set of tokens routed to expert $s$.

\textbf{Heterogeneous Computation}.
To characterize the model under heterogeneous computation, we   ignore ADC and DAC noise, as these can be largely mitigated through calibration. For weight-programming noise, we simplify the model\footnotetext{Our analysis can be extended to the full model in (\ref{eq_prog_noise_model}),  at the cost of additional calculation,    without changing the main insights. Therefore, we focus on the simplified model in the analysis.} in   (\ref{eq_prog_noise_model}) by considering only the first term in $\sigma_{i,j}$, resulting in 

\begin{equation}\label{eqn:c}
\hat{W}_{i,j}=W_{i,j}+\mathcal{N}(0,c^2W^2_{\operatorname{max}}).
\end{equation} 
(\ref{eqn:c}) allows us to vary $c$ to control the noise level.  

We consider two scenarios: (i) Fully heterogeneous computation, as described in Figure \ref{fig:hetero-moe}, with the resulting model denoted as $f_H(\mathbf{X})$; (ii) 
All-experts-in-analog computation, where linear modules of all experts are computed on the AIMC chip, denoted as $f_A(\mathbf{X})$.

\iffalse 

\textbf{Weight-programming noise model.} We consider a variable programming-weight magnitude $c$. The analyzed noise model is given by 

\begin{equation}\label{eqn:c}
\hat{W}_{i,j}=W_{i,j}+\mathcal{N}(0,c^2W^2_{\operatorname{max}}).
\end{equation}
\fi 

\textbf{Sequence sampling model.} Let $(\mathbf{X},y)\sim\mathcal{D}$, where tokens of $\mathbf{X}$ drawn from an orthonormal set $\mathcal{P} \subset \mathbb{R}^d$. Two vectors $\mathbf{o}_1, \mathbf{o}_2 \in \mathcal{P}$ define the task-relevant set $\mathcal{P}_r = \{\pm \mathbf{o}_1, \pm \mathbf{o}_2\}$; all others are task-irrelevant. Each sequence contains exactly one task-relevant token, determining the label: sequences containing $\pm \mathbf{o}_1$ are labeled $+1$, and those containing $\pm \mathbf{o}_2$ are labeled $-1$. Remaining tokens are drawn independently from $\mathcal{P}\backslash\{\mathbf{o}_1,\mathbf{o}_2\}$. With probability $\alpha \in (0,1/4)$, the task-relevant token is $\mathbf{o}_1$ or $\mathbf{o}_2$.

\textbf{Training Setup}. We analyze the case, where the network given in (\ref{th_model}) is trained for $T$ steps via SGD with batch size $B$ to minimize the Hinge loss, i.e., $\max(1-yf^{(t)}(\mathbf{X}),0)$.

\textbf{Expert specialization.} To quantify expert specialization, we define, for each expert $s$ and task-relevant token $\mathbf{v} \in \mathcal{P}_r$, the probability
\begin{equation}
p_{\mathbf{v}}^{(s)} := \mathbb{P}\Big[G_j^{(s,t)} \ge 1/l \;\big|\; x^{(j)} = \mathbf{v} \text{ for some } j \in [n] \Big]
\end{equation}
over the data distribution $\mathcal{D}$. A value of $p_{\mathbf{v}}^{(s,t)} = 1$ indicates that the task-relevant token $\mathbf{v}$ is routed to expert $s$ in every sequence containing $\mathbf{v}$, i.e., expert $s$ is fully specialized on $\mathbf{v}$.

\subsection{Theoretical Generalization Guarantees}

\begin{lemma}\label{lm_m_1}
    Suppose, the model in (\ref{th_model}) is trained for
    $T=\Theta(l^2\sqrt{\log l}/\alpha)$ steps. For any $\mathbf{v}\in\{\mathbf{o}_1,\mathbf{o}_2\}$ and any expert $s,s^\prime$, such that $p_{\mathbf{v}}^{(s,T)}=1$ and $p_{\mathbf{-v}}^{(s^\prime,T)}=1$, we have
    \begin{equation*}
        \operatorname{MaxNNScore}^{(s)}<\operatorname{MaxNNScore}^{(s^\prime)}
    \end{equation*}
\end{lemma}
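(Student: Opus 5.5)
Since the down-projection is frozen at $\mathbf{W}_{\mathrm{down}}^{(s)}=a^{(s)}\mathbf{1}^{m\times d}$, every neuron of every expert has down-norm $\sqrt{d}$, and there is no gate matrix. Hence $\operatorname{MaxNNScore}^{(s)}=\sqrt{d}\,\max_{r\in[m]}\lVert \mathbf{w}_r^{(s,T)}\rVert_2$, where $\mathbf{w}_r^{(s,t)}$ is the $r$-th column of $\mathbf{W}_{\mathrm{up}}^{(s,t)}$. The lemma therefore reduces to comparing the largest up-projection neuron norms of an expert specialized on the less-frequent token $\mathbf{v}$ and one specialized on the more-frequent token $-\mathbf{v}$. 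The plan is to follow the feature-learning analysis of the setup of \cite{chowdhury2026efficient}. We decompose each neuron along the orthonormal set $\mathcal{P}$:
\[
\mathbf{w}_r^{(s,t)}=\mathbf{w}_r^{(s,0)}+\sum_{\mathbf{p}\in\mathcal{P}}\gamma_{r,\mathbf{p}}^{(s,t)}\mathbf{p}.
\]
We then track the growth of the task-relevant coefficients along $\mathbf{o}_1,\mathbf{o}_2$, while showing that the coefficients along irrelevant tokens stay small.

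First, I would write the SGD update of $\mathbf{w}_r^{(s)}$ under the hinge loss. For a sequence with nonzero loss, the gradient is $-y\,a^{(s)}\frac{1}{d}\cdot d\sum_{j\in J_s}G_j^{(s)}\phi'(\langle\mathbf{w}_r^{(s)},\mathbf{x}^{(j)}\rangle)\mathbf{x}^{(j)}$. So the component along a task-relevant token $\mathbf{v}$ increases only on samples that contain $\mathbf{v}$, route it to $s$, and still incur loss. If $p_{\mathbf{v}}^{(s,T)}=1$, then from the routing dynamics (assumed, as in the companion analysis, to lock in early) the token is routed to $s$ with gate value $\ge 1/l$ along the trajectory. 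The expected increment per step along $\mathbf{v}$ for neurons with the correct sign is then $\Theta(\eta\,\alpha\,G)$ for the token $\mathbf{v}$ and $\Theta(\eta\,(1-\alpha)\,G)$ for $-\mathbf{v}$. Contributions along irrelevant tokens cancel in expectation because they are label-independent. I would bound their fluctuation by Hoeffding over $T$ steps with batch size $B$, giving $O(\eta\sqrt{T/B})$-type terms.

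Second, I would sum over $T=\Theta(l^2\sqrt{\log l}/\alpha)$ steps. The key subtlety is that the hinge loss stops gradients once the margin reaches $1$. Sequences containing $-\mathbf{v}$ are frequent, so the loss on them may vanish before step $T$. My plan is to show that the chosen $T$ is exactly the time the rarer token $\mathbf{v}$ needs to attain margin, namely $\Theta(l^2\sqrt{\log l})$ effective updates at frequency $\alpha$. Meanwhile the coefficient for $-\mathbf{v}$ receives updates at rate $(1-\alpha)$. Growth along $-\mathbf{v}$ does stall at margin $1$, but only after it exceeds the level achieved along $\mathbf{v}$. The reason is that the softmax gates $G_j^{(s)}$ increase with routing concentration, and the frequent-token expert's gate saturates faster, so fewer neuron units are needed. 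I expect this comparison to be the main obstacle: the result is a strict inequality of maxima, and the loss-saturation effect works against it. I would first try to show the margin on $-\mathbf{v}$ sequences saturates via the gate rather than the neuron weights, or else that the per-step increment along $-\mathbf{v}$ dominates by a $(1-\alpha)/\alpha>3$ factor before any saturation. The latter is feasible since $\alpha<1/4$.

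Finally, I would combine these pieces. The initialization norm $\lVert\mathbf{w}_r^{(s,0)}\rVert$ is small, and the irrelevant coefficients are of lower order. Hence $\max_r\lVert\mathbf{w}_r^{(s,T)}\rVert$ is dominated by $\max_r|\gamma_{r,\mathbf{v}}^{(s,T)}|$ up to a $(1\pm o(1))$ factor, and likewise $\max_r\lVert\mathbf{w}_r^{(s',T)}\rVert$ is dominated by $\max_r|\gamma_{r,-\mathbf{v}}^{(s',T)}|$. Since the second dominant coefficient exceeds the first by a constant factor, the maximum neuron norm of $s'$ exceeds that of $s$ with high probability. Multiplying both sides by $\sqrt{d}$ yields the claim.
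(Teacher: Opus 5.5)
There is a genuine gap at the step you yourself flag as the main obstacle. You differentiate the truncated hinge loss, so updates stop once $yf\ge 1$. You then need the $-\mathbf{v}$ coefficient of $s'$ to end strictly above the $\mathbf{v}$ coefficient of $s$, and neither of your proposed resolutions delivers this.

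\emph{Rate dominance before saturation.} Faster growth of the $-\mathbf{v}$ coefficient before saturation says nothing about final levels. Once $-\mathbf{v}$ sequences reach unit margin their updates stop. The $\mathbf{v}$ coefficient keeps growing until $\mathbf{v}$ sequences also reach unit margin. So the final levels are set by the unit-margin requirement, not by token frequency, and no $(1-\alpha)/\alpha$ separation follows.

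\emph{Gate saturation.} This argument points the wrong way. If the frequent expert's gate concentrates faster, \emph{fewer} neuron units are needed for unit margin, which lowers its plateau rather than raising it.

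\emph{Role of $T$.} $T$ is not the rare token's margin-attainment time. It is the router-convergence time of the companion analysis, at which $p_{\mathbf{v}}^{(s,T)}=1$ and the relevant gate exceeds $1/2$.

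The missing idea is that in this framework the gradients are evaluated on the untruncated loss $l^{(t)}(\mathbf{X},y)=1-yf^{(t)}(\mathbf{X})$; the hinge loss is only the reported objective. There is therefore no saturation, and each relevant coefficient drifts at a rate proportional to its token's frequency over the whole horizon. The paper's argument is then direct.
\begin{enumerate}
\item Lemma I.1(i) of the companion analysis turns the time-$T$ hypotheses into $s\in S_{\mathbf{v}}$ and $s'\in S_{-\mathbf{v}}$. This gives the trajectory-wide routing control that you only assume as ``locking in early.''
\item Lemma J.5(ii) bounds the per-step gradient of $s$ along $\mathbf{o}_1$ by $\alpha/2+\tilde O(1/\sqrt B)$, so that coefficient is $O(\alpha\eta_e T)$.
\item The other coordinates of $s$ stay $O(C_n)$. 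Along $\mathbf{o}_2$ this holds because the gradient sign always pushes the coefficient toward zero. Along irrelevant $\mathbf{q}$ it holds because the per-step gradient is $O(1/d)+\tilde O(1/\sqrt B)$.
\item For $s'$, Lemma J.6(ii) gives a gradient along $-\mathbf{o}_1$ of at most $-\Omega(1-\alpha)$ for all $t\ge \alpha T/(1-\alpha)$. That is, the frequent expert's gate is $\Omega(1)$ over most of the horizon, so its coefficient is $\Omega((1-2\alpha)\eta_e T)$.
\item Since $(1-2\alpha)/\alpha>2$ for $\alpha<1/4$, the strict inequality follows.
\end{enumerate}
The constant lower bound on the frequent expert's gate is essential. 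With only $G\ge 1/l$, as in your sketch, the rate $(1-\alpha)/l$ need not beat the rare expert's rate $\alpha$, whose gate may be as large as $1$.

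Your reduction to up-projection norms and your decomposition along $\mathcal{P}$ are fine. What must be replaced are the saturation step and the gate lower bound.
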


Lemma \ref{lm_m_1} shows that, the expert specialized on more-frequent task-relevant token (e.g., the expert $s^\prime$ with $p_{-\mathbf{o}_1}^{(s^\prime)}=1$) maintains larger \textit{maximum neuron norm score} compared to the expert specialized on less-frequent task-relevant token (e.g., the expert $s$ with $p_{\mathbf{o}_1}^{(s^\prime)}=1$).

The weight-programming noise model in~(\ref{eq_prog_noise_model}) indicates that larger weight magnitudes incur higher analog noise, while also corresponding to larger neuron norms. Based on this intuition, we establish formal generalization guarantees for both fully analog and heterogeneous computation in Theorem~\ref{th_1}.

\begin{table*}[t]
\centering
\small
\setlength{\tabcolsep}{2pt}
\renewcommand{\arraystretch}{1.2}
\caption{Accuracy (\%) of MoE models with DAC--ADC noise added to the inputs and outputs of different modules.}
\label{tab_adc}
\begin{tabularx}{\textwidth}{lll*{8}{>{\centering\arraybackslash}X}p{0.9cm}}
\toprule
\textbf{Model} & \textbf{Noise} & \textbf{Modules} 
& \multicolumn{8}{c}{\textbf{Tasks}} 
& \textbf{Avg.} \\
\cmidrule(lr){4-11}
& & 
& \textbf{PIQA} & \textbf{ARC-e} & \textbf{ARC-c} & \textbf{BoolQ} 
& \textbf{HellaS.} & \textbf{Wino.} & \textbf{MathQA} & \textbf{MMLU} & \\
\midrule
\multirow{3}{*}{\shortstack{DeepSeek-\\MoE}}
& Digital (FP-16) & --- 
& 80.36 & 72.85 & 47.61 & 72.78 & 77.38 & 70.17 & 31.42 & 37.67 & 61.28 \\
\cmidrule(lr){2-12}
& \multirow{2}{*}{DAC-ADC} & Experts
& 79.16 & 73.40 & 47.44 & 72.57 & 76.78 & 69.77 & 31.22 & 37.92 & \bf{61.03} \\
& & Experts+Dense
& 75.68 & 68.06 & 39.16 & 68.29 & 68.58 & 60.69 & 28.17 & 31.71 & 55.04 \\
\midrule
\multirow{3}{*}{OLMoE}
& Digital (FP-16) & --- 
& 79.71 & 76.94 & 49.66 & 69.85 & 78.19 & 68.90 & 28.44 & 53.65 & 63.17 \\
\cmidrule(lr){2-12}
& \multirow{2}{*}{DAC-ADC} & Experts
& 79.49 & 76.43 & 49.49 & 69.02 & 77.34 & 67.72 & 28.98 & 51.22 & \bf{62.46} \\
& & Experts+Dense
& 77.86 & 74.03 & 48.89 & 68.87 & 77.57 & 67.88 & 27.81 & 50.43 & 61.54 \\
\bottomrule
\end{tabularx}
\end{table*}

\begin{figure*}[t]
    \centering
    \begin{subfigure}[t]{0.40\textwidth}
        \centering
        \includegraphics[width=0.9\linewidth]{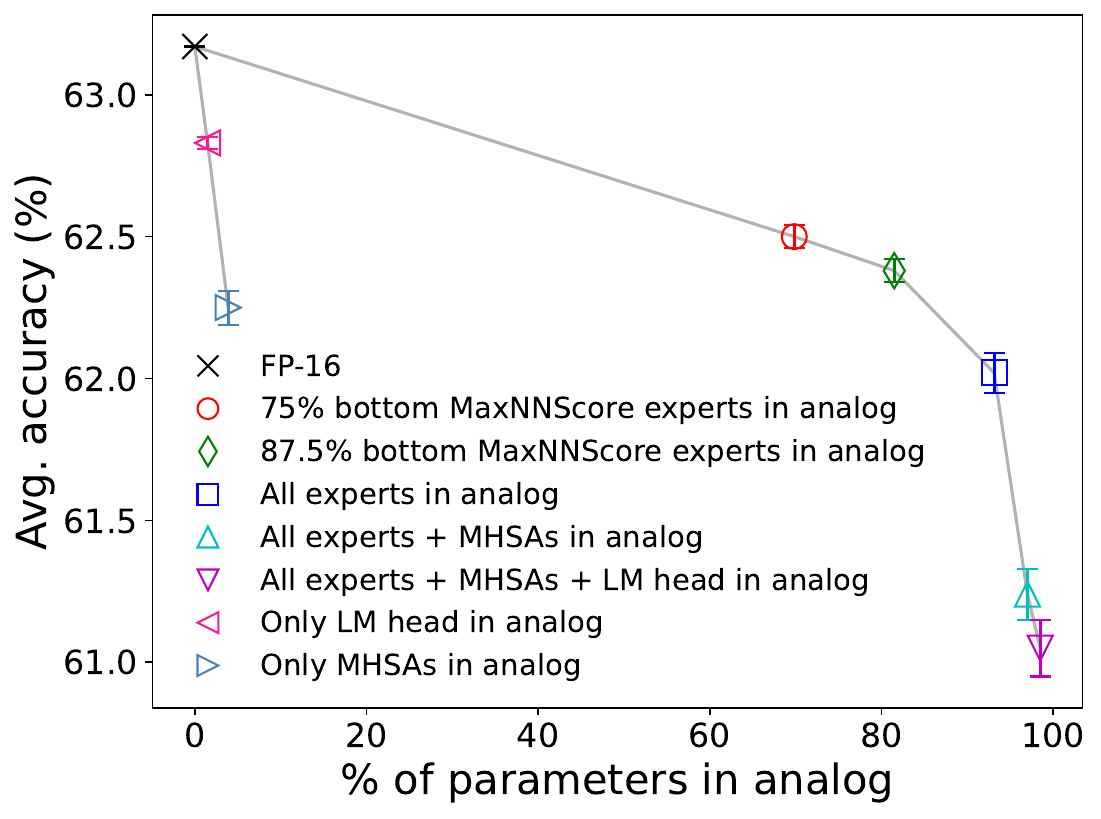}
        \caption{OLMoE}
        \label{fig_dense_olmoe}
    \end{subfigure}
    \hspace{1cm}
    \begin{subfigure}[t]{0.40\textwidth}
        \centering
        \includegraphics[width=0.9\linewidth]{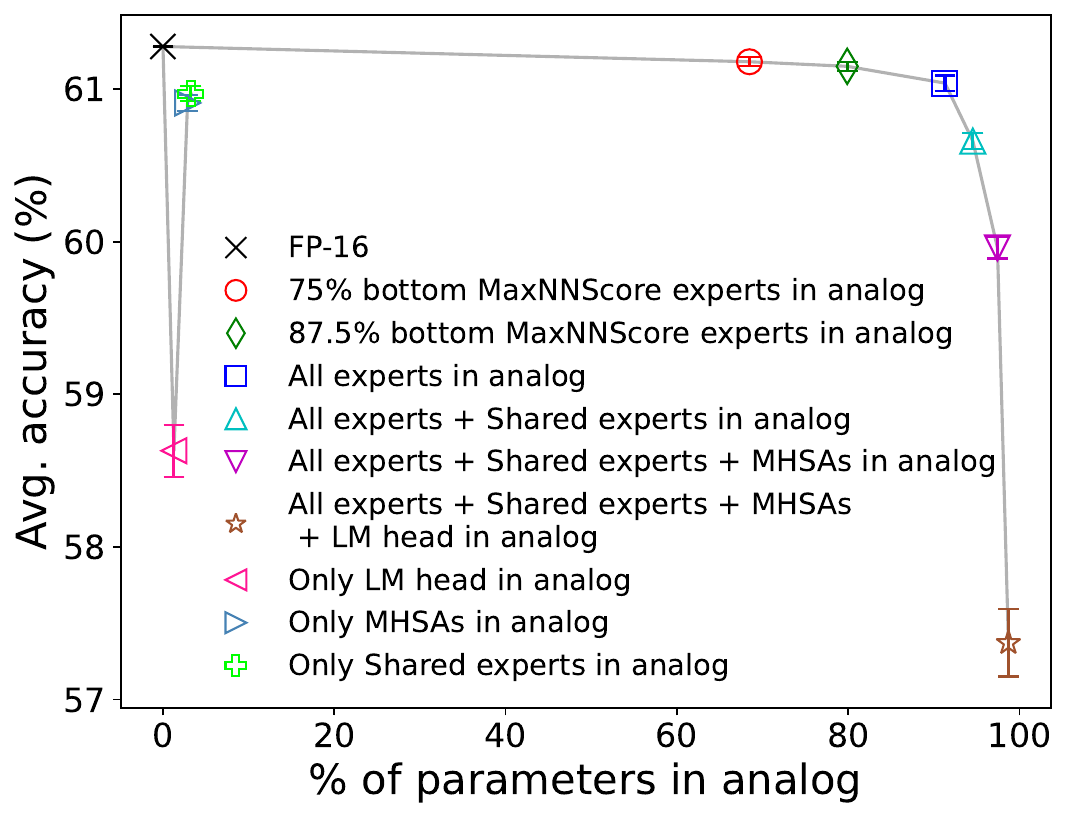}
        \caption{DeepSeekMoE}
        \label{fig_dense_deepseek}
    \end{subfigure}

    \caption{Effect of computing dense modules in analog. Accuracy degradation is significant when densely activated modules are executed on AIMC, despite their small parameter footprint.}
    \label{fig_dense}
\end{figure*}

%We denote by $f_A(\mathbf{X})$ the output of (\ref{th_model}) when all experts are computed in AIMC, with programming noise magnitude $c_{A}$. Under the proposed heterogeneous compute scheme, the output is denoted by $f_H(\mathbf{X})$, with corresponding programming noise magnitude $c_{H}$.

\begin{theorem}\label{th_1}
    Suppose the model given in (\ref{th_model}) is trained for $T=\Theta(l^2\sqrt{\log l}/\alpha)$ steps.  Let $\gamma$ denote the fraction of the experts $s\in[k]$ in the trained model that satisfies $p_{-\mathbf{v}}^{(s,T)}=1$ for $v\in\{\mathbf{o}_1,\mathbf{o}_2\}$. If the programming-weight magnitude $c$ in the noise model (\ref{eqn:c}) satisfies 
    \begin{equation}\label{c_a}
       c\leq  c_{A}:=O(\cfrac{\alpha}{1-\alpha}\times\cfrac{1}{l^2\sqrt{d\log (kmld^2)}}),
    \end{equation}
    then with high probability, the Analog-Expert MoE model, denoted by $f_{A}$,  has guaranteed generalization i.e., 
    \begin{equation}
    \mathbb{P}[\forall(\mathbf{X},y)\sim\mathcal{D}:yf_{A}^{(T)}(\mathbf{X})>0]=1.
    \end{equation}

Moreover, if $\Gamma\ge\gamma$ fraction of the experts with top $\operatorname{MaxNNScore}$ are computed in digital while others in analog, and the noise magnitude $c$ satisfies 
    \begin{equation}\label{c_h}
      % c_{H}=O(\cfrac{1}{l^2\sqrt{d\log (kmld^2)}}),、
      c \leq c_{H}:=  \frac{1-\alpha}{\alpha}c_{A},
    \end{equation}
    then with high probability,  the heterogeneous-expert MoE model, denoted by $f_{H}$,  has guaranteed generalization. i.e.,  
    \begin{equation}
    \mathbb{P}[\forall(\mathbf{X},y)\sim\mathcal{D}:yf_{H}^{(T)}(\mathbf{X})>0]=1.
    \end{equation}
\end{theorem}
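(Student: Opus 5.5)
We argue by a margin-versus-perturbation comparison. We take the noiseless trained network $f^{(T)}$ from the setup of \cite{chowdhury2026efficient}, with $T=\Theta(l^2\sqrt{\log l}/\alpha)$. We bound the worst-case output perturbation caused by the Gaussian programming noise in (\ref{eqn:c}) and require it to be smaller than the clean margin. From the training-dynamics analysis we import three facts. First, every sequence is classified with margin $yf^{(T)}(\mathbf{X})\ge 1$ (hinge loss driven to zero). Second, experts with $p^{(s,T)}_{\mathbf{v}}=1$ for the less-frequent token $\mathbf{v}\in\{\mathbf{o}_1,\mathbf{o}_2\}$ have up-projection neurons whose component along $\mathbf{v}$ grows at rate $\Theta(\alpha)$ per step. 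Third, experts specialized on $-\mathbf{v}$ grow at rate $\Theta(1-\alpha)$. Hence the largest neuron weights satisfy $\max_{i,j}|W^{(s)}_{i,j}|=\Theta(\alpha T/\,\cdot)$ for the former and $\Theta((1-\alpha)T/\,\cdot)$ for the latter. Irrelevant directions stay $O(1/\mathrm{poly})$-small. Lemma~\ref{lm_m_1} is the norm-comparison consequence of these facts.

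\textbf{Step 1 (perturbation bound).} For each analog expert $s$, write $\hat{\mathbf{W}}^{(s)}_{\mathrm{up}}=\mathbf{W}^{(s)}_{\mathrm{up}}+\mathbf{E}^{(s)}$ with entries of $\mathbf{E}^{(s)}$ distributed as $\mathcal{N}(0,c^2W_{\max}^{(s)2})$. Since tokens lie in a finite orthonormal set $\mathcal{P}$, the preactivation perturbation is $\mathbf{x}^\top\mathbf{E}^{(s)}_{:,i}\sim\mathcal{N}(0,c^2W_{\max}^{(s)2})$. A Gaussian tail bound with a union bound over $k$ experts, $m$ neurons, and the at most $d$ distinct tokens (and the $d$ columns of the fixed down-projection, if it is also programmed) gives, with high probability, $|\mathbf{x}^\top\mathbf{E}^{(s)}_{:,i}|\le c\,W^{(s)}_{\max}\sqrt{2\log(kmld^2)}$. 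Because $\phi$ is 1-Lipschitz and $\mathbf{W}_{\mathrm{down}}=a^{(s)}\mathbf{1}$, the change in $\frac1d\mathbf{1}^\top\mathbf{f}_s(\mathbf{x})$ is at most $G^{(s)}\sum_i|\mathbf{x}^\top\mathbf{E}_{:,i}|$. Summing over the $l$ routed tokens per expert and over the experts touching a sequence gives a total perturbation of $|f_A-f|\lesssim l^2\sqrt d\,c\,\max_s W^{(s)}_{\max}\sqrt{\log(kmld^2)}$. The extra $l$ and $\sqrt d$ factors come from the neuron count normalization and the norm-to-entry conversion; I would track them from the reference's scalings.

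\textbf{Step 2 (fully analog).} The clean margin is $\Omega(1)$, and is driven by the weakest contributor, the less-frequent token with weights $\Theta(\alpha)$-scaled. The perturbation, however, is governed by $\max_s W_{\max}^{(s)}$, which is attained by the more-frequent experts and is larger by a factor of $(1-\alpha)/\alpha$. The ratio margin/perturbation is therefore at least $1$ whenever $c\le O\big(\frac{\alpha}{1-\alpha}\frac{1}{l^2\sqrt{d\log(kmld^2)}}\big)=c_A$. This yields $yf_A^{(T)}>0$ for every $\mathbf{X}$ in the support simultaneously, and the finite support makes the probability over $\mathcal{D}$ equal to one.

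\textbf{Step 3 (heterogeneous).} By Lemma~\ref{lm_m_1}, and the fact that non-specialized experts have only small irrelevant weights, the top-$\gamma$ experts by $\mathrm{MaxNNScore}$ are exactly those specialized on $-\mathbf{o}_1,-\mathbf{o}_2$. With $\Gamma\ge\gamma$ these experts are noiseless. Every remaining analog expert has $W_{\max}^{(s)}=O(\alpha\text{-scale})$, so rerunning Step 1 with the reduced maximum gains the factor $(1-\alpha)/\alpha$ and gives $c_H=\frac{1-\alpha}{\alpha}c_A$. The main obstacle is Step 2's tightness: showing that the margin for less-frequent-token sequences scales with the $\alpha$-rate weights while the noise on those same sequences is dominated by the large-norm experts. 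This requires checking, via the routing analysis, that the large-norm experts indeed receive tokens from these sequences, through their top-$l$ slots filled by irrelevant tokens. I would first try to bound the routing weights $G^{(s)}_j$ of irrelevant tokens in $-\mathbf{v}$-experts from below using the reference's router-dynamics lemmas.
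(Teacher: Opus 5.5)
This is essentially the paper's proof: a margin-versus-perturbation comparison in which the clean margin on less-frequent-token sequences is at the $\alpha$-rate scale $\Omega(\gamma_{\mathbf{o}_1}kml\,C_n\sqrt{\log l/C_p})$, read off directly from Lemma~\ref{lm1_a} ($p^{(s,T)}_{\mathbf{o}_1}=1$, $G_j^{(s,T)}>1/2$) rather than from an $\Omega(1)$ hinge margin, while the Gaussian perturbation is governed by the column maxima, which are $(1-\alpha)/\alpha$ times larger in the $-\mathbf{v}$-specialized experts and leave the analog part once the top-$\Gamma\ge\gamma$ experts by $\operatorname{MaxNNScore}$ are digital. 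The obstacle you flag is not needed, because the theorem only claims sufficiency: an upper bound on the perturbation summed over all analog experts is enough, and under expert-choice routing every expert processes exactly $l$ tokens of every sequence with routing weights summing to one, so no lower bound on the routing weights of irrelevant tokens is required.
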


Theorem~\ref{th_1} characterizes the maximum tolerable programming noise magnitude for guaranteed generalization under the fully analog and the proposed heterogeneous compute schemes, given in (\ref{c_a}) and~(\ref{c_h}), respectively. It shows that computing a fraction $\gamma$ of experts in digital, i.e., those specialized on more frequent task-relevant tokens, allows the remaining analog experts to tolerate a programming noise magnitude larger by a factor of $\Omega\!\left(\frac{1-\alpha}{\alpha}\right)$.

\begin{figure*}[t]
    \centering
    \begin{subfigure}[t]{0.40\textwidth}
        \centering
        \includegraphics[width=0.9\linewidth]{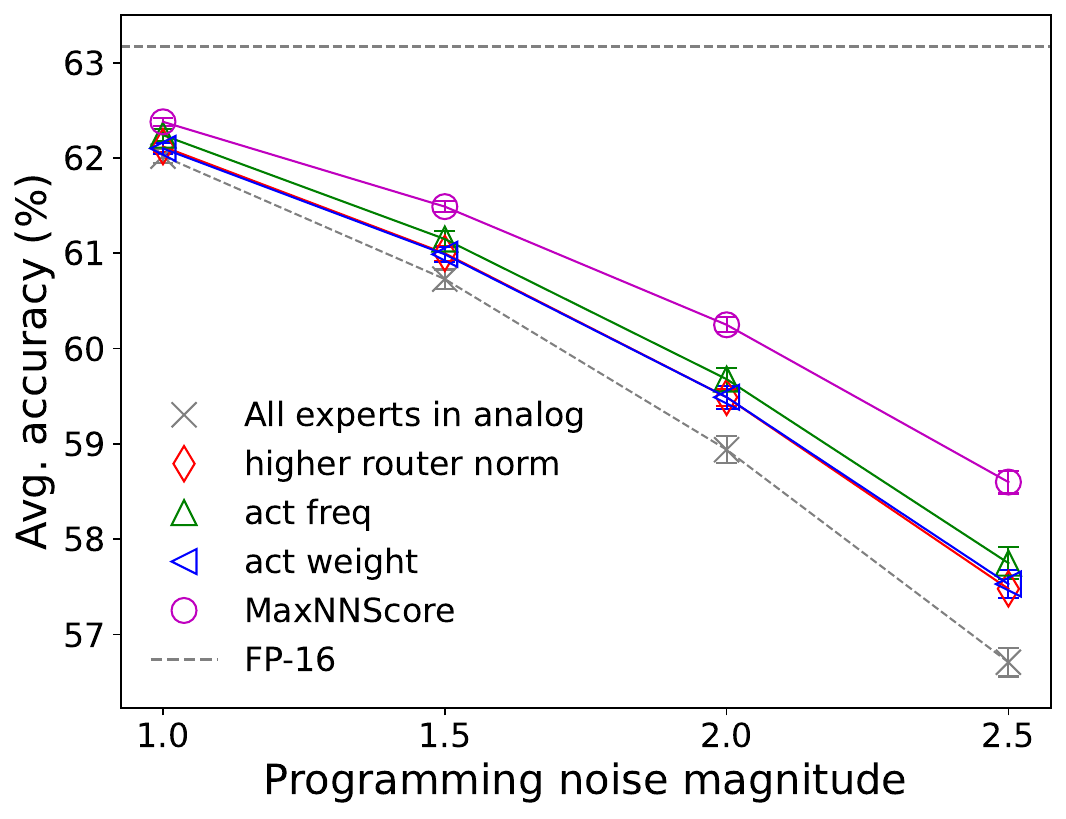}
        \caption{12.5\% experts in digital}
        \label{fig_12p5_olmoe}
    \end{subfigure}
    \hspace{1cm}
    \begin{subfigure}[t]{0.40\textwidth}
        \centering
        \includegraphics[width=0.9\linewidth]{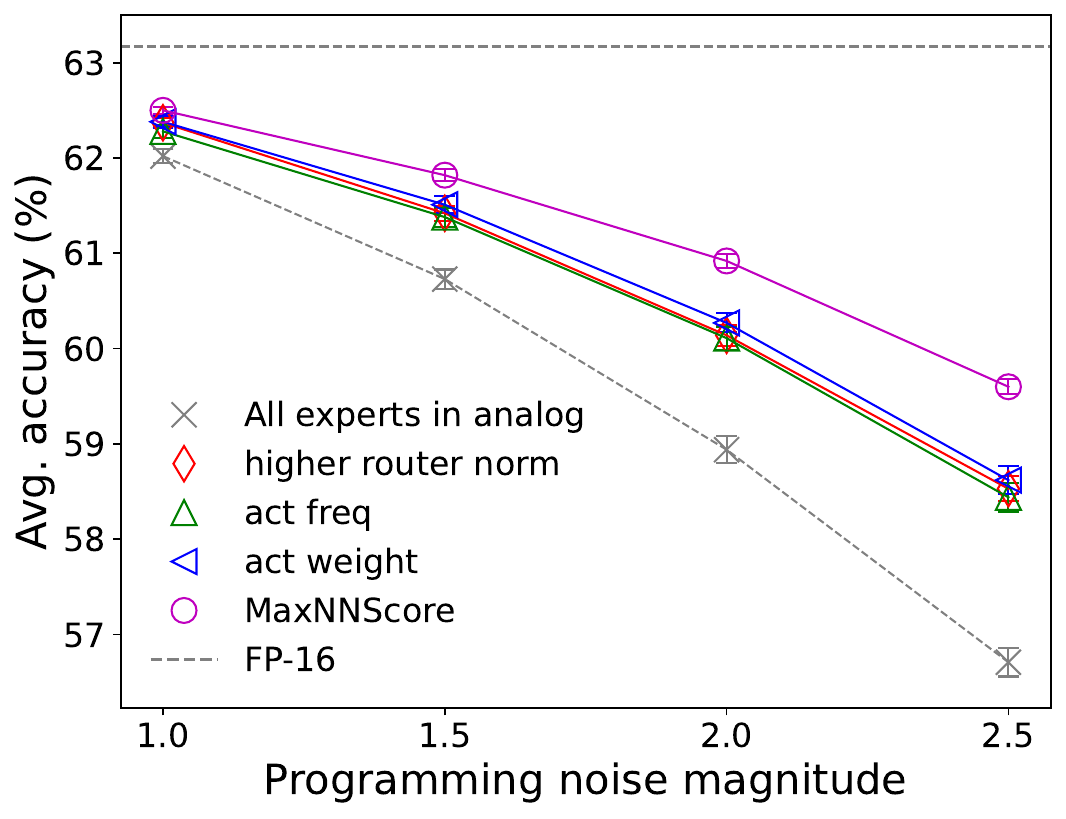}
        \caption{25\% experts in digital}
        \label{fig_25p0_olmoe}
    \end{subfigure}

    \caption{Performance of different digital expert selection methods in OLMoE}
    \label{fig_maxn_olmoe}
\end{figure*}

\begin{figure*}[t]
    \centering
    \begin{subfigure}[t]{0.40\textwidth}
        \centering
        \includegraphics[width=0.9\linewidth]{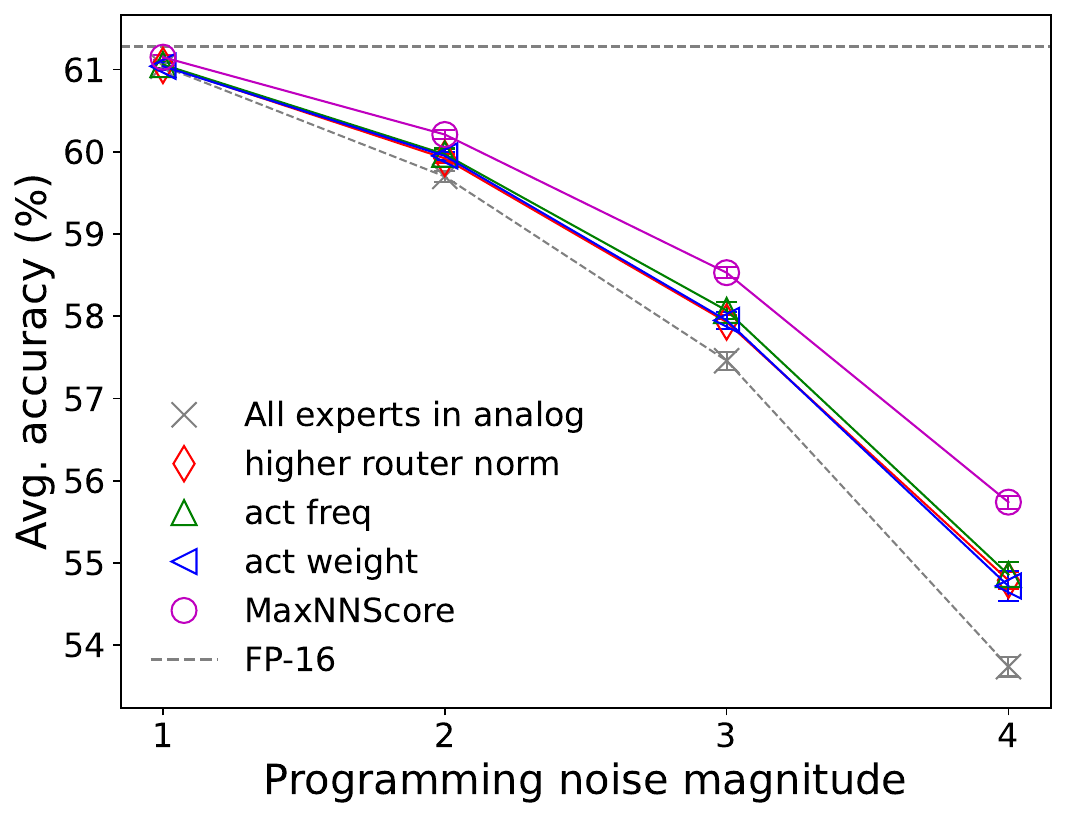}
        \caption{12.5\% experts in digital}
        \label{fig_12p5_deepseek}
    \end{subfigure}
    \hspace{1cm}
    \begin{subfigure}[t]{0.40\textwidth}
        \centering
        \includegraphics[width=0.9\linewidth]{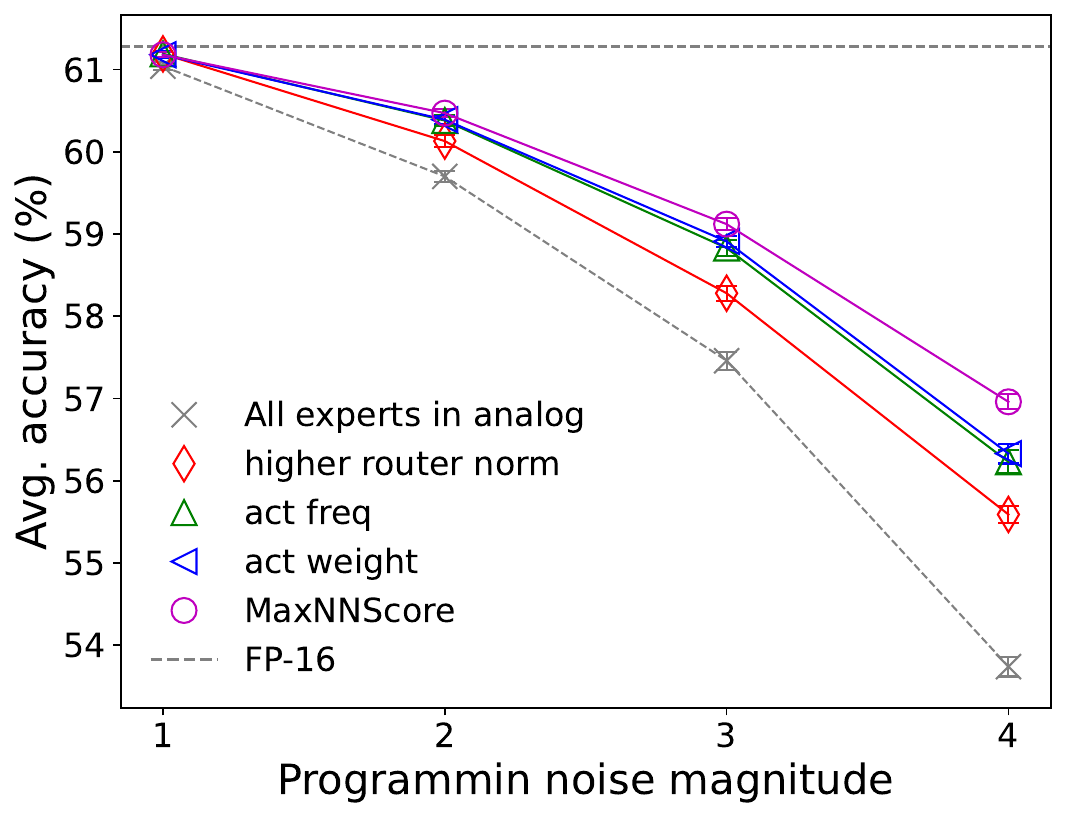}
        \caption{25\% experts in digital}
        \label{fig_25p0_deepseek}
    \end{subfigure}

    \caption{Performance of different digital expert selection methods in DeepSeekMoE}
    \label{fig_maxn_deepseek}
\end{figure*}

\section{Experiments}

\subsection{Experimental Setup}

\textbf{Evaluated models.} We evaluate the robustness of the proposed heterogeneous computation on two large pre-trained MoE LLMs: DeepSeekMoE~\cite{dai2024deepseekmoe} and OLMoE~\cite{muennighoff2025olmoe}, with 16B and 7B parameters, respectively. DeepSeekMoE consists of 28 Transformer layers, where the FFN of the first layer is dense and all subsequent FFN blocks are MoE; each MoE block includes a dense shared expert that processes all tokens. OLMoE contains 16 Transformer layers, all with MoE-based FFNs. In both models, each MoE block has 64 sparse experts.

\textbf{Evaluated task.} We test the performance of these models on 8 benchmark LLM tasks: PIQA \citep{Bisk2020}, ARC-Challenge and ARC-Easy \citep{allenai:arc}, BoolQ \citep{clark2019boolq}, HellaSwag \citep{zellers2019hellaswag}, WinoGrande \citep{sakaguchi2021winogrande}, MathQA \citep{amini-etal-2019-mathqa}, and MMLU \citep{hendryckstest2021}. These tasks cover diverse areas of LLM capabilities, spanning problem solving in various professional domains, along with commonsense and mathematical reasoning, trivia, and natural language inference.

\textbf{Analog noise simulation.} We simulate the DAC-ADC noise through IBM's AIHWKIT-Lightning \cite{aihwkitlightning}. The weight-programming noise is simulated by directly adding Gaussian noise according to (\ref{eq_prog_noise_model}). For both of the cases, we select the NVM tile size of 512. For programming noise, we report the average and standard error of the results of 32 different random seeds.

\subsection{Results on DAC-ADC Noise}

We evaluate the robustness of MoE models against DAC-ADC noise. We select 8-bit DAC and ADC for the experiments. Table \ref{tab_adc} provides the results of inference accuracy of the two MoE models across eight benchmark LLM tasks, where the DAC-ADC noise is added to the inputs and outputs of different modules. While the models show robustness for noise added to all the experts, adding noise to dense modules significantly degrade performance. The insignificant drop of accuracy (only 0.25\% in DeepSeekMoE, and 0.70\% drop in OLMoE) for the expert-only case implies that the DAC-ADC noise of the experts can be mitigated to a very low level by properly calibrating the hyperparameters of DAC and ADC. This justifies the design of our digital expert selection strategy based on weight-programming noise.

Given that calibrated DAC–ADC noise is negligible and its simulation is prohibitively slow, we report only weight-programming noise results in the remainder of this section.

\begin{table*}[t]
\centering
\small
\setlength{\tabcolsep}{4pt}
\renewcommand{\arraystretch}{1.3}
\caption{The throughput, energy efficiency, and accuracy of OLMoE evaluated for batch size 32.}
\label{tab_throughput}
\begin{tabularx}{\textwidth}{l l r r @{\extracolsep{\fill}} c c c}
\toprule
\textbf{Param.} & \textbf{Modules} & \textbf{Throughput} & \textbf{Energy Efficiency} & \multicolumn{3}{c}{\textbf{Avg. Accuracy (\%) vs. Prog. noise magnitude}} \\
\cmidrule(lr){5-7}
\textbf{in Digital} & \textbf{in Digital} & \textbf{(Tokens/s)} & \textbf{(Tokens/Watt $\cdot$ s)} & \textbf{1.0} & \textbf{1.5} & \textbf{2.5} \\
\midrule
100\% (FP-16) & --- & 4220.07 & 10.55 & \multicolumn{3}{c}{63.17} \\
\midrule
0\% (analog) & None & 768.41 & 23949.07 & 61.05 $\pm$ 0.10 & 58.45 $\pm$ 0.15 & 49.74 $\pm$ 0.23 \\
\cmidrule(lr){1-7}
5.37\% (het.gen) & Dense & 49781.23 & 123.92 & 62.02 $\pm$ 0.07 & 60.73 $\pm$ 0.10 & 56.71 $\pm$ 0.15 \\
\cmidrule(lr){2-7}
17.01\% (het.gen) & \shortstack[l]{Dense +\\ 12.5\% experts} & 24924.12 & 62.20 & 62.38 $\pm$ 0.04 & 61.49 $\pm$ 0.06 & 58.60 $\pm$ 0.12 \\
\cmidrule(lr){2-7}
28.65\% (het.gen) & \shortstack[l]{Dense +\\ 25.0\% experts} & 14513.52 & 36.25 & 62.50 $\pm$ 0.04 & 61.82 $\pm$ 0.06 & 59.60 $\pm$ 0.08 \\
\bottomrule
\end{tabularx}
\end{table*}

\subsection{Results on Weight-programming Noise}

\textbf{Effect of computing dense modules in analog.} We investigate the relative sensitivity of the dense modules and the sparse expert modules for DeepSeekMoE and OLMoE. The dense modules in OLMoE includes the MHSA modules and the LM head. The DeepSeekMoE includes the additional dense modules of shared expert.

Figure \ref{fig_dense} presents the average accuracy of OLMoE and DeepSeekMoE across the eight benchmark LLM tasks, while the standard weight-programming noise given in (\ref{eq_prog_noise_model}) is added to different dense modules separately and together with the sparse expert modules. As shown in the figure, despite having a very small percentage of parameters, all of the dense modules shows high sensitivity to the programming noise. In OLMoE, the multi-head self-attention (MHSA) modules account for only 3.88\% of the total parameters, yet placing only these modules in analog results in a larger performance degradation than placing 87.5\% lower $\operatorname{MaxNNScore}$ experts in analog, which comprise 81.5\% of the model parameters. Similarly, in DeepSeekMoE, the LM head, the MHSAs, and the shared experts account for 1.28\%, 2.87\%, and 3.26\% of the total parameters, respectively. Yet, placing each of these modules individually in analog results in higher drop than placing 100\% of the sparse experts, comprising 91.28\% of the total parameters, in analog.

\textbf{Verifying the effectiveness of the digital expert selection strategy.} We compare the performance of our proposed $\operatorname{MaxNNScore}$ based digital expert selection strategy against several baselines used for selecting expert for pruning \cite{koishekenov2023memory,chowdhury2024a} and higher quantized precision \cite{li2024examining,huang2025mixture}. We briefly describe the baselines as given below.

\textit{Activation Frequency.} This metric rank the experts based on their activation frequency, i.e., the fraction of tokens received by each expert over calibration dataset.

\textit{Activation Weight.} This metric rank the experts based on their average routing weight over calibration dataset.

\textit{Router norm.} This metric does not require any calibration data. The metric rank the experts based on the norm of their corresponding routing parameters.

Figures~\ref{fig_maxn_olmoe} and~\ref{fig_maxn_deepseek} show the average accuracy over the eight benchmark LLM tasks as a function of programming noise magnitude for different expert selection methods, evaluated on OLMoE and DeepSeekMoE, respectively. As shown in the figures, the proposed $\operatorname{MaxNNScore}$ based consistently outperforms other baselines, maintaining an increasing gap of accuracy with the noise magnitude. It can be inferred from the figures that at least one-third of the performance drop can be recovered by placing only one-eighth of the experts in digital, and half of the performance drop can be recovered by placing only one-fourth of the experts in digital.

\subsection{Energy Efficiency vs. Throughput vs. Accuracy Tradeoff in Heterogeneous Computation}

We evaluate the throughput (Tokens/s), and the energy efficiency (Tokens/Watt$\cdot$s) for the proposed heterogeneous computing method, and compare with the full analog and full digital compute paradigm. The details about the evaluation of these values are provided in Appendix \ref{dtl_thpt}.

\textbf{Heterogeneous computing provides better tradeoff.} Table \ref{tab_throughput} presents the throughput, energy efficiency, and the average accuracy over the eight benchmark tasks under different weight-programming magnitudes for OLMoE. From the table, we can infer that, the full-digital (FP-16) computing is severely energy inefficient, while providing a moderate throughput value. On the other hand, while the full-analog approach's energy efficiency is incomparably high, it's generating throughput is the lowest. Note that, unlike digital accelerators, the throughput of full-analog does not increases with batch size, which indicates higher latency for larger batch size. As expected, the approach also suffers from the worst accuracy degradation. However, as observed in the table, the proposed heterogeneous approach provides a balance between throughput and energy efficiency. Moreover, for higher noise magnitudes, computing more experts in digital will allow to trading off some compute efficiency (i.e., throughput and energy efficiency) to achieve higher model performance.

\section{Conclusion}

In this paper, a heterogeneous computing approach of MoE is provide
where the noise sensitive components are computed in digital accelerator, while rest of the components are computed in analog in-memory devices. We demonstrate that computing dense modules in digital along with the experts of high neuron norm improves model's robustness against AIMC noises. Furthermore, we show that by varying the experts' fraction in digital allows flexibility in balancing throughput, energy consumption and accuracy. Future works includes system design for dynamic computation of experts in AIMC and digital accelerators based on the compute and energy budget.   

%\clearpage
\section*{Impact Statement}

%This paper presents work whose goal is to advance the field of Machine
%Learning. There are many potential societal consequences of our work, none
%

This work aims to improve the efficiency and robustness of deploying large Mixture-of-Experts models through heterogeneous analog–digital computing. By reducing memory movement and energy consumption while maintaining accuracy, the proposed approach may lower the computational and environmental costs of large-scale model deployment and enable use in more resource-constrained settings. The work does not  directly address or exacerbate societal concerns such as bias, fairness, or misuse.

\bibliography{ref}
\bibliographystyle{icml2026}

%%%%%%%%%%%%%%%%%%%%%%%%%%%%%%%%%%%%%%%%%%%%%%%%%%%%%%%%%%%%%%%%%%%%%%%%%%%%%%%
%%%%%%%%%%%%%%%%%%%%%%%%%%%%%%%%%%%%%%%%%%%%%%%%%%%%%%%%%%%%%%%%%%%%%%%%%%%%%%%
% APPENDIX
%%%%%%%%%%%%%%%%%%%%%%%%%%%%%%%%%%%%%%%%%%%%%%%%%%%%%%%%%%%%%%%%%%%%%%%%%%%%%%%
%%%%%%%%%%%%%%%%%%%%%%%%%%%%%%%%%%%%%%%%%%%%%%%%%%%%%%%%%%%%%%%%%%%%%%%%%%%%%%%
\newpage
\appendix
\onecolumn
\section{The Details for Evaluating Throughput and Energy Efficiency}\label{dtl_thpt}

To evaluate the two quantities, we consider that the digital accelerator is equivalent to an NVIDIA A100 GPU. The quantities are evaluated under 100\% model FLOPs utilization (MFU), with 624 tera operations per second (624 TOP/s) at 400Watt, and with data transfer bandwidth of 1,555 GB/s for the accelerator. In that case, the throughput is evaluated as,
\begin{equation}
\begin{aligned}
    &\text{Throughput (tokens/s)}
    &=\cfrac{\text{No. of tokens generated}}{\max(\frac{\text{Total TOPs}}{\text{MFU624TOPs/s}}, \frac{\text{Total weight transfer}}{1555\text{GB/s}})}
\end{aligned}
\end{equation}
The energy efficiency can be directly calculated using the throughput and power rating of the device.

For the analog accelerator, we compute the throughput by dividing the total number of tokens generated during inference by the total latency accumulated over all the asynchronous operations in forward passes. Similarly, the energy efficiency is evaluated by dividing the number of tokens generated by the total energy consumption accumulated over all operations in forward passes. The latencies and energy consumption values for different operations are taken from \cite{buchel2025efficient}.

For the heterogeneous case, we take the upper-bound of the latencies over both accelerators, and the energy consumption is evaluated by adding the product of the digital accelerator's power rating and latency to analog accelerator's energy consumption.

\section{Hyperparameter Calibration Results for DAC and ADC}

We calibrate the hyperparameter $\kappa$ and $\lambda$ to obtain the results in Table \ref{tab_adc}, according to the method described in section \ref{aimc_descrption}. We use Wkitext-103 test set for calibration. The results are given in

\begin{table}[h]
    \centering
    \caption{$\kappa$ vs. Perplexity (PPL) for OLMoE. Noise added only at the experts.}
    \label{tab:kappa_ppl}
    \begin{tabular}{c|cccccccc}
        \hline
        $\kappa$ & 10 & 18 & 25 & 30 & \bf{35} & 40 & 45 & 50 \\
        \hline
        PPL & 10.98 & 7.59 & 7.10 & 7.00 & \bf{6.97} & 6.99 & 7.02 & 7.07 \\
        \hline
    \end{tabular}
\end{table}

\begin{table}[h]
    \centering
    \caption{$\lambda$ vs. Perplexity (PPL) for OLMoE ($\kappa=35$). Noise added only at the experts.}
    \label{tab:lambda_ppl}
    \begin{tabular}{c|ccccccccccc}
        \hline
        $\lambda$ & 0.75 & 0.9 & \bf{1.0} & 1.125 & 1.25 & 1.5 & 1.75 & 2.0 & 2.25 & 2.5 & 2.75 \\
        \hline
        PPL & 7.15 & 7.10 & \bf{7.10} & 7.11 & 7.13 & 7.19 & 7.26 & 7.32 & 7.42 & 7.48 & 7.57 \\
        \hline
    \end{tabular}
\end{table}

\begin{table}[h]
    \centering
    \caption{$\kappa$ vs. Perplexity (PPL) for OLMoE. Noise added to both experts and dense modules.}
    \label{tab:kappa_ppl_full}
    \begin{tabular}{c|cccccccc}
        \hline
        $\kappa$ & 10 & 15 & 20 & 25 & \bf{30} & 35 & 40 & 45 \\
        \hline
        PPL & 22.26 & 8.66 & 7.50 & 7.16 & \bf{7.07} & 7.07 & 7.12 & 7.18 \\
        \hline
    \end{tabular}
\end{table}

\begin{table}[h]
    \centering
    \caption{$\lambda$ vs. Perplexity (PPL) for OLMoE ($\kappa=30$). Noise added to experts and dense modules.}
    \label{tab:lambda_ppl_full}
    \begin{tabular}{c|cccc}
        \hline
        $\lambda$ & 0.75 & \bf{1.0} & 1.25 & 1.5 \\
        \hline
        PPL & 7.42 & \bf{7.26} & 7.28 & 7.38 \\
        \hline
    \end{tabular}
\end{table}

\begin{table}[h]
    \centering
    \caption{$\kappa$ vs. Perplexity (PPL) for DeepSeekMoE. Noise added to experts.}
    \label{tab:kappa_ppl_additional}
    \begin{tabular}{c|cccccc}
        \hline
        $\kappa$ & 30 & 35 & \bf{40} & 45 & 50 & 60 \\
        \hline
        PPL & 6.64 & 6.62 & 6.62 & 6.63 & 6.64 & 6.68 \\
        \hline
    \end{tabular}
\end{table}

\begin{table}[h]
    \centering
    \caption{$\lambda$ vs. Perplexity (PPL) for DeepSeekMoE ($\kappa=40$). Noise added to experts only.}
    \label{tab:lambda_ppl_additional}
    \begin{tabular}{c|ccccccccc}
        \hline
        $\lambda$ & 0.9 & 1.0 & 1.25 & 1.5 & 2.0 & \bf{2.25} & 2.5 & 3.0 & 4.0 \\
        \hline
        PPL & 6.97 & 6.86 & 6.73 & 6.69 & 6.68 & 6.67 & 6.67 & 6.69 & 6.76 \\
        \hline
    \end{tabular}
\end{table}

\begin{table}[h]
    \centering
    \caption{$\kappa$ vs. Perplexity (PPL) for DeepSeekMoE. Noise added to experts and dense modules.}
    \label{tab:kappa_ppl_simple}
    \begin{tabular}{c|ccccc}
        \hline
        $\kappa$ & 30 & \bf{35} & 40 & 45 & 50 \\
        \hline
        PPL & 7.06 & \bf{7.05} & 7.09 & 7.18 & 7.30 \\
        \hline
    \end{tabular}
\end{table}

\begin{table}[h]
    \centering
    \caption{$\lambda$ vs. Perplexity (PPL) for DeepSeekMoE ($\kappa=35$). Noise added to experts and dense modules.}
    \label{tab:lambda_ppl_simple}
    \begin{tabular}{c|cccccc}
        \hline
        $\lambda$ & 1.0 & 1.25 & \bf{1.5} & 1.75 & 2.0 & 2.25 \\
        \hline
        PPL & 7.75 & 7.52 & 7.51 & 7.58 & 7.70 & 7.85 \\
        \hline
    \end{tabular}
\end{table}

\clearpage

\section{Token Visualization of Neurons Corresponding to $\operatorname{MaxNNorm}$}

\begin{figure}[h]
    \centering
    \includegraphics[width=0.48\textwidth]{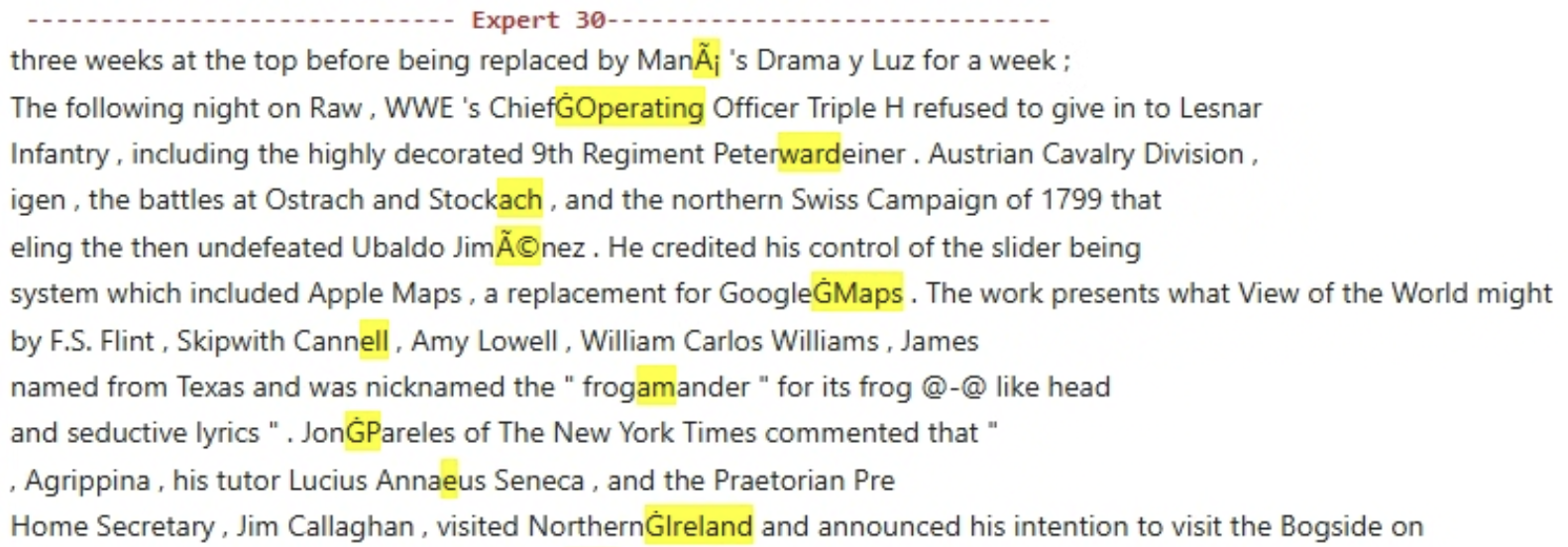}\hfill
    \includegraphics[width=0.48\textwidth]{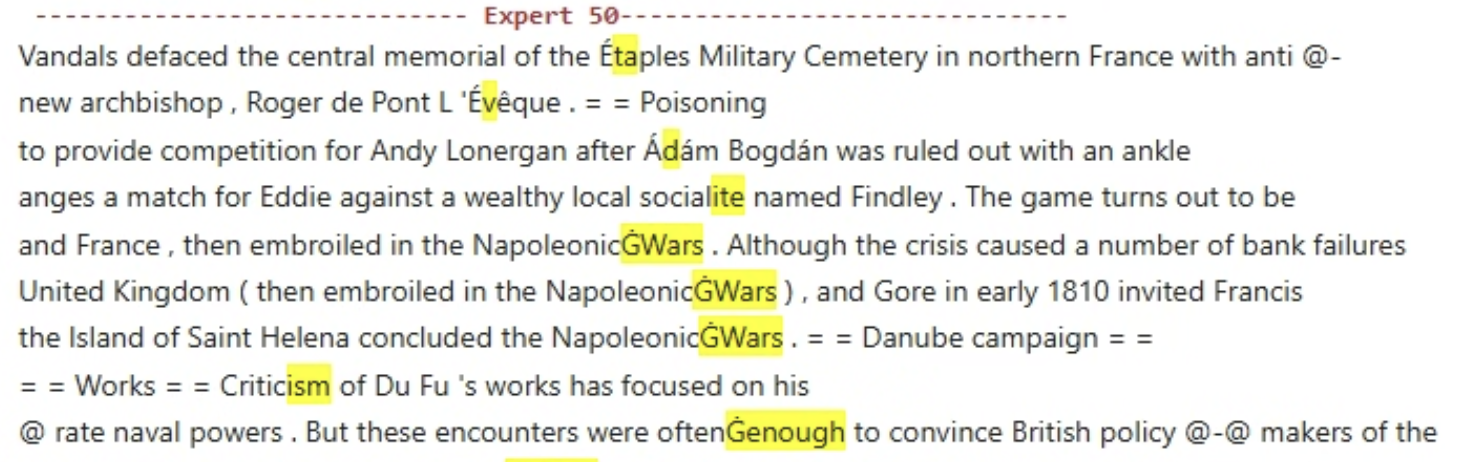}

    \vspace{0.3cm}

    \includegraphics[width=0.48\textwidth]{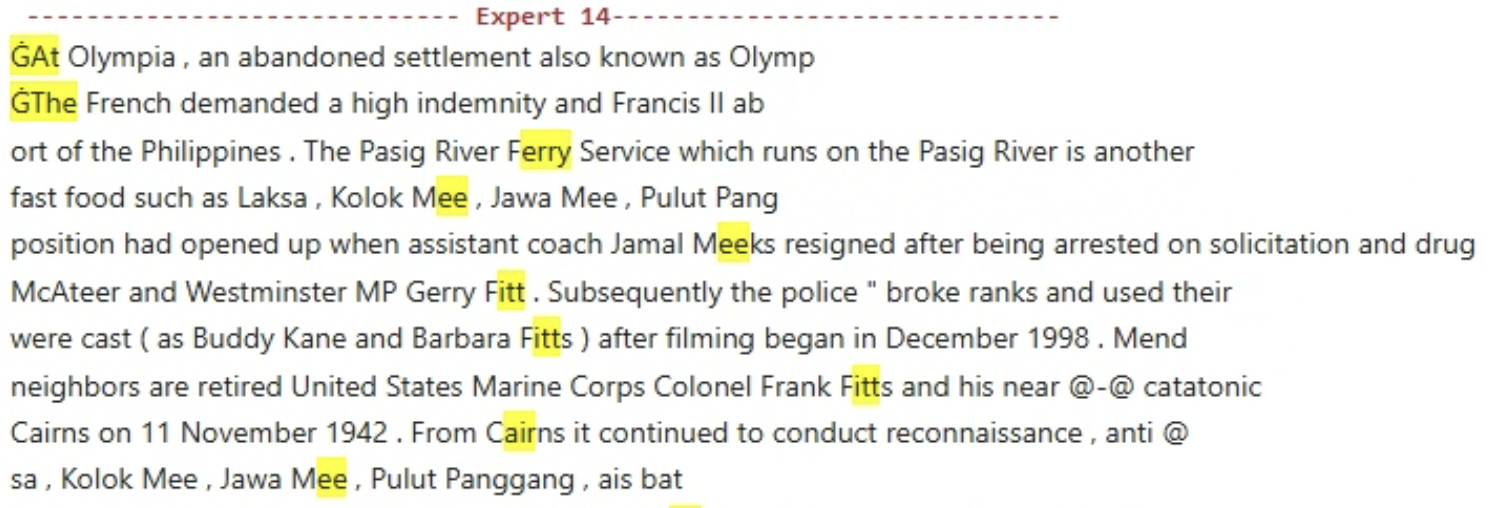}\hfill
    \includegraphics[width=0.48\textwidth]{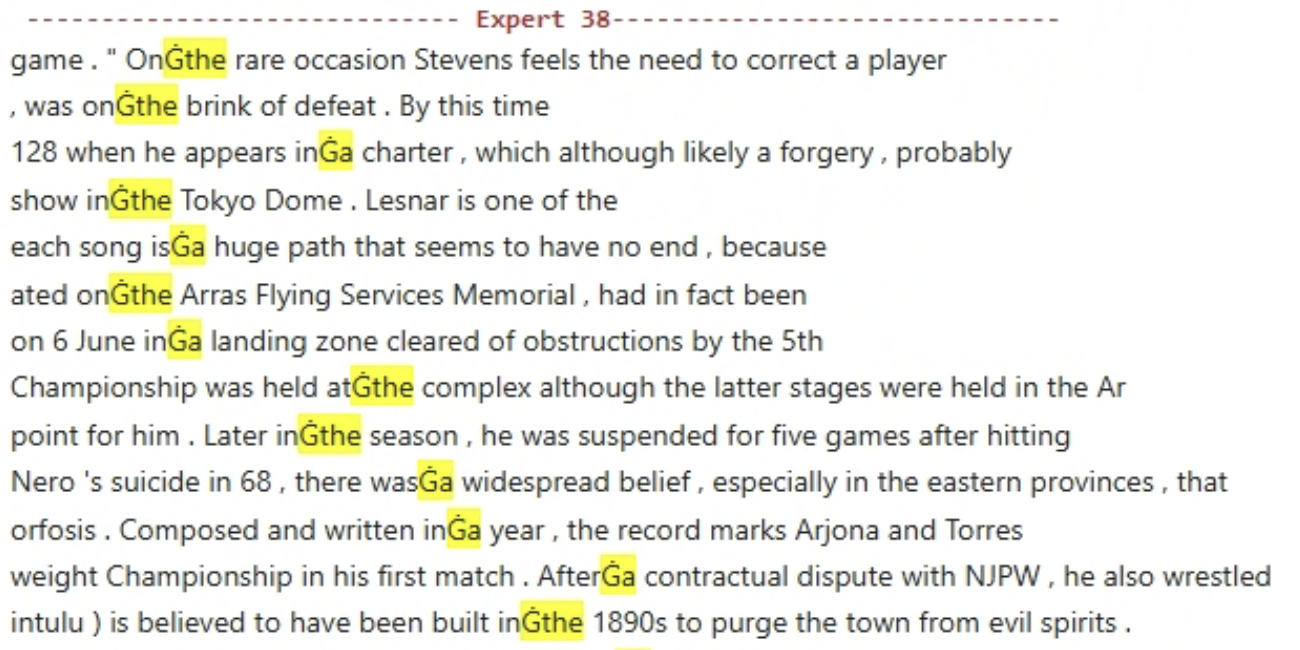}

    \vspace{0.3cm}

    \includegraphics[width=0.48\textwidth]{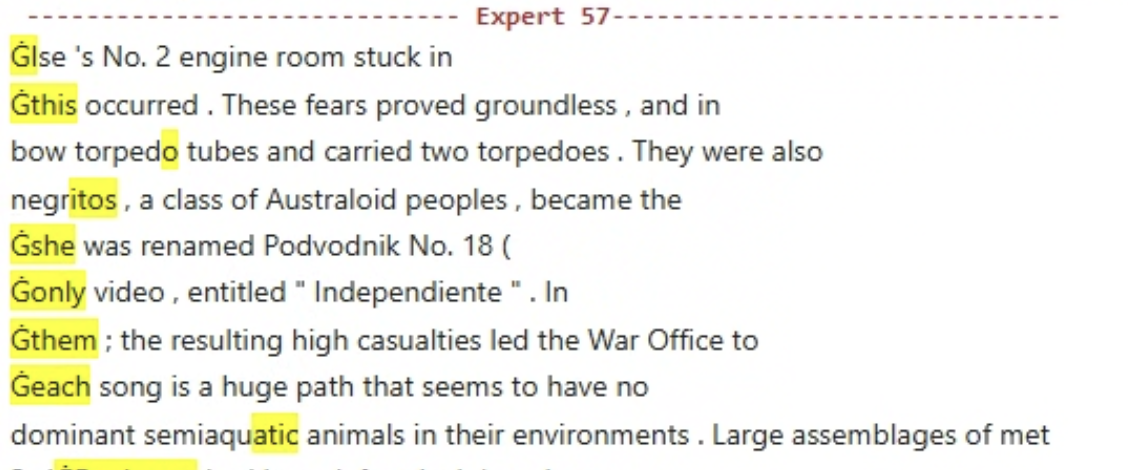}\hfill
    \includegraphics[width=0.48\textwidth]{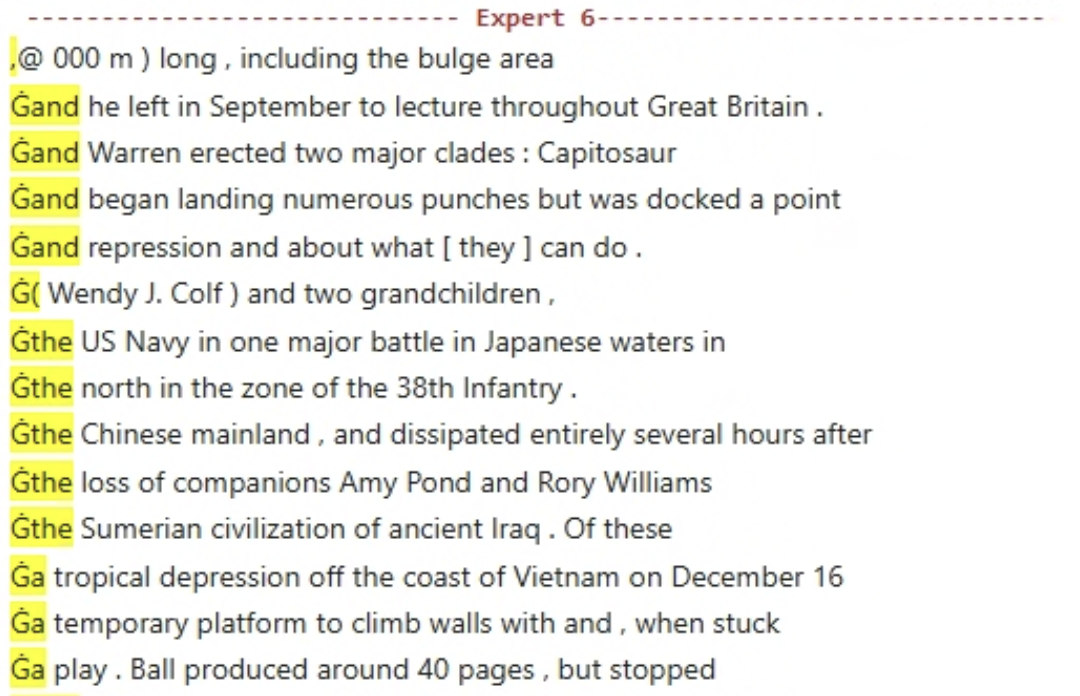}

    \caption{Token visualization of the first MoE block of OLMoE. Expert 30, 50, and 14 correspond to lowest three $MaxNNorm$ neurons over the up-projection matrices. Similarly, Expert 38, 57, and 6 correspond to the highest $MaxNNorm$ neurons over the up-projection matrices.}
    \label{fig:tok_vis}
\end{figure}

We visualize the top activating tokens of the neurons corresponding to maximum neuron norm for OLMoE. Figure \ref{fig:tok_vis} presents the results for the first layer of the model. Expert 30, 50, and 14 are the lowest $MaxNNorm$ experts for the up-projection matrices. Expert 38, 57, and 6 are highest $MaxNNorm$ experts for the up-projection layers. As we can see, the lowest neuron norm experts are activating by semantically less-frequent tokens (e.g., 'ach', 'ell', 'Ireland', 'ite', 'erry' etc.). On the other hand highest neuron norm experts are activating to semantically more-frequent tokens (e.g., 'the', 'a', 'this', 'them', 'each', 'and', etc.). This justify our intuition that higher $\operatorname{MaxNNorm}$ experts specialize to more-frequent tokens.

\clearpage
\section{Preliminaries}

For theoretical analysis, we adopt exactly the same setting as in \cite{chowdhury2026efficient}. We re-state the setting here.

As we theoretically analyze the expert-choice routing, for any training step $t$, equation (\ref{th_model}) can be re-written as,

\begin{equation}
    f^{(t)}(\mathbf{X})=\sum_{s=1}^kf_s^{(t)}(\mathbf{X}), \text{where } f_s^{(t)}(\mathbf{X})=a^{(s)}\sum_{j\in J_s^{(t)}(\mathbf{X})}G_j^{(s,t)}\sum_{r=1}^m\phi(\langle\mathbf{w}_r^{(s,t)},\mathbf{x}^{(j)}\rangle) 
\end{equation}

where, $J_s^{(t)}(\mathbf{X})$ is the set of tokens routed to expert $s$ at step $t$, $\mathbf{w}_r^{(s,t)}$ is the $r$-th neuron of $\mathbf{W}_{\mathrm{up}}^{(s)}$ at step $t$, and $\mathbf{x}^{(j)}$ is the $j$-th token of the sequence $\mathbf{X}$.

Tokens with top $l$ routing scores, i.e., the tokens of $\mathbf{X}$ with top $l$ indices of $[\mathbf{X}^\top\mathbf{\Sigma}]_{:,s}$ are routed to expert $s$.   

We consider $\phi=\operatorname{ReLU}(\cdot)$, i.e., the rectified linear unit. The routing weight of $j$-th token at expert $s$ can be calculated as,
\begin{equation}
G_j^{(s,t)} =
\begin{cases}
\dfrac{\exp(\langle \mathbf{w}_s^{(t)}, \mathbf{x}^{(j)} \rangle)}
{\sum_{i \in J_s^{(t)}(\mathbf{X})} \exp(\langle \mathbf{w}_s^{(t)}, \mathbf{x}^{(i)} \rangle)},
& j \in J_s^{(t)}(\mathbf{X}), \\[1.2ex]
0, & \text{otherwise}.
\end{cases}
\end{equation}
Here, $\mathbf{w}_s^{(t)}=\mathbf{\Sigma}_{:,s}^{(t)}$ is the routing vector corresponding to the expert $s$ at step $t$.

The model is trained to minimize the Hinge loss
\begin{equation}
    \hat{l}^{(t)}(\mathbf{X},y)=\max(1-yf^{(t)}(\mathbf{X}),0)
\end{equation} 
while the gradients are evaluated on
\begin{equation}
    l^{(t)}(\mathbf{X},y)=1-yf^{(t)}(\mathbf{X})
\end{equation}

For any input $(\mathbf{X},y)$, the gradient for $\mathbf{w}_r^{(s,t)}$ is evaluated as,
\begin{equation}
    \cfrac{\partial l^{(t)}(\mathbf{X},y)}{\partial \mathbf{w}_r^{(s,t)}}=-ya^{(s)}\sum_{j\in J_s^{(t)}(\mathbf{X})}G_j^{(s,t)}\mathbf{x}^{(j)}1_{\langle \mathbf{w}_r^{(s,t)},\mathbf{x}^{(j)}\rangle\ge0}
\end{equation}

The model is trained via Stochastic Gradient Descent algorithm (SGD) with batch size $B$. The expert weights are updated with learning rate $\eta_e$, and the routing parameters are updated with a learning rate $\eta_r$. The batch gradient for $\mathbf{w}_r^{(s,t)}$ is evaluated as,
\begin{equation}
    \cfrac{\partial l}{\partial \mathbf{w}_r^{(s,t)}}=\cfrac{1}{B}\sum_{\mathbf{X}\in\mathcal{B}_t}\cfrac{\partial l^{(t)}(\mathbf{X},y)}{\partial \mathbf{w}_r^{(s,t)}}
\end{equation}

\textbf{Notations:}
\begin{enumerate}
    \item $\Tilde{O}(\cdot)$ and $\Tilde{\Omega}(\cdot)$ hides the factor $\log(poly(d))$ with a sufficiently large polynomial $poly(\cdot)$
    \item With high probability (abbreviated as $w.h.p.$) refers to the probability $1-\cfrac{1}{poly(d)}$.
\end{enumerate}

\textbf{Definitions:}

For any $\mathbf{v}\in\mathcal{P}_r$, we define a complementary expert proficiency measure for the expert $s$ at time $t$ as,

$\bar{p}_\mathbf{v}^{(s,t)}:=\mathbb{P}\left[(\mathbf{X},y)\sim\mathcal{D}:\exists j\in J_s^{(t)} \text{ such that } \mathbf{x}^{(j)}=\mathbf{v}\big|\exists j\in[n] \text{ such that }\mathbf{x}^{(j)}=\mathbf{v}\right]$.

Without the loss of generality, we assume that for any $s\in S_\mathbf{v}$, $\bar{p}_{-\mathbf{v}}^{(s,0)}=O(1/d)$.

We define $C_n:=\max\left\{\left|\left|\mathbf{w}_r^{(s,0)}\right|\right|\right\}_{s\in[k], r\in[m]}$.

We define, $\gamma_{\mathbf{v}}:=\cfrac{\left|S_{\mathbf{v}}\right|}{\left|S_{+}\right|}$ for $\mathbf{v}\in\{\pm \mathbf{o}_1\}$ and $\gamma_{\mathbf{v}}:=\cfrac{\left|S_{\mathbf{v}}\right|}{\left|S_{-}\right|}$ for $\mathbf{v}\in\{\pm \mathbf{o}_2\}$.

Therefore, $\gamma=\max\{\gamma_{\mathbf{v}}\}_{\mathbf{v}\in\{\mathbf{o}_1,\mathbf{o}_2\}}$.

Without the loss of generality, we assume that $\forall \mathbf{v}\in\mathcal{P}_r, \gamma_{\mathbf{v}}=\Omega(1)$.

We define, $C_p:=\min\{\langle \mathbf{w}_s^{(0)},\mathbf{q}-\mathbf{q}^\prime\rangle\}_{s\in[k],\mathbf{q}\in\mathcal{P}\cup\{-\mathbf{o}_1,-\mathbf{o}_2\},\mathbf{q}^\prime\in\mathcal{P}\cup\{-\mathbf{o}_1,-\mathbf{o}_2\}\backslash\{\mathbf{q}\}}$.

We assume that $C_p>0$.

We assume that for any $\mathbf{v}\in\mathcal{P}_r$ and any $s\in S_\mathbf{v}$, $\frac{\left|\{r\in[m]:\langle \mathbf{w}_r^{(s,0)},\mathbf{v}\rangle\ge0\}\right|}{m}=\Omega(1)$,\\
and $\big||S_{+}|-|S_-|\big|=O(\sqrt{k})$.

We define,
\begin{itemize}
    \item $G_{\mathbf{v}}^{(s,t)}$: Routing weights of the token $x^{(j)}=\mathbf{v}$ for some $j\in[n]$ and $\mathbf{v}\in\mathcal{P}_r$ at expert $s$ and step $t$
    \item $G_{\mathbf{q}}^{(s,t)}$: Routing weights of the token $x^{(j)}=\mathbf{q}$ for some $j\in[n]$ and $\mathbf{q}\in\mathcal{P}\backslash\{\mathbf{o}_1,\mathbf{o}_2\}$ at expert $s$ and step $t$
    \item $l_\mathbf{q}^{(s,t)}:=\left|\{j\in J_s^{(t)}(\mathbf{X}):\mathbf{x}^{(j)}=\mathbf{q}\}\right|$, is the number of copies of the task-irrelevant vector $\mathbf{q}\in\mathcal{P}\backslash\{\mathbf{o}_1,\mathbf{o}_2\}$ in the set of top $l$ tokens for the input sequence $\mathbf{X}$ at expert $s$ and step $t$
\end{itemize}

\textbf{Components of the neurons' gradients.}

For any input $(\mathbf{X},y)\sim\mathcal{D}$, the gradient component of $\mathbf{w}_r^{(s,t)}$ along any task-relevant vector $\mathbf{v}\in\mathcal{P}_r$ and along any task-irrelevant vector $\mathbf{q}\in\mathcal{P}\backslash\{\mathbf{o}_1,\mathbf{o}_2\}$ at iteration $t$ are evaluated as follows:
\begin{equation}\label{eq_ec_c_q}
    \left\langle \cfrac{\partial l(\mathbf{X},y)}{\partial \mathbf{w}_r^{(s,t)}},\mathbf{q}\right\rangle=
\begin{cases}
    0 & \text{if $\langle \mathbf{w}_r^{(s,t)},\mathbf{q}\rangle<0$}\\\\
    0 & \text{if $\langle \mathbf{w}_r^{(s,t)},\mathbf{q}\rangle\ge0$ but, $\not\exists j \in J_s^{(t)}(\mathbf{X})$ s.t. $\mathbf{x}^{(j)}=\mathbf{q}$}\\\\
    -ya^{(s)} l_\mathbf{q}^{(s,t)}G_\mathbf{q}^{(s,t)} & \text{if $\langle \mathbf{w}_r^{(s,t)},\mathbf{q}\rangle\ge0$ and, $\exists j \in J_s^{(t)}(\mathbf{X})$ s.t. $\mathbf{x}^{(j)}=\mathbf{q}$}
\end{cases}
\end{equation}

\begin{equation}\label{eq_ec_c_v}
    \left\langle \cfrac{\partial l(\mathbf{X},y)}{\partial \mathbf{w}_r^{(s,t)}},\mathbf{v}\right\rangle=
\begin{cases}
    0 & \text{if $\langle \mathbf{w}_r^{(s,t)},\mathbf{v}\rangle<0$ but $\not\exists j \in J_s^{(t)}(\mathbf{X})$ s.t. $\mathbf{x}^{(j)}=-\mathbf{v}$}\\\\
    ya^{(s)}G_{-\mathbf{v}}^{(s,t)}(\mathbf{X}) & \text{if $\langle \mathbf{w}_r^{(s,t)},\mathbf{v}\rangle<0$ and $\exists j \in J_s^{(t)}(\mathbf{X})$ s.t. $\mathbf{x}^{(j)}=-\mathbf{v}$}\\\\
    0 & \text{if $\langle \mathbf{w}_r^{(s,t)},\mathbf{v}\rangle\ge0$ but $\not\exists j \in J_s^{(t)}(\mathbf{X})$ s.t. $\mathbf{x}^{(j)}=\mathbf{v}$}\\\\
    -ya^{(s)}G_\mathbf{v}^{(s,t)}(\mathbf{X}) & \text{if $\langle \mathbf{w}_r^{(s,t)},\mathbf{v}\rangle\ge0$ and $\exists j \in J_s^{(t)}(\mathbf{X})$ s.t. $\mathbf{x}^{(j)}=\mathbf{v}$}\\\\
\end{cases}
\end{equation}

\begin{lemma}[Lemma J.5(ii) of \cite{chowdhury2026efficient}]\label{lm2}
    For any expert $s\in S_\mathbf{v}$ such that $\mathbf{v}\in\mathcal\{\mathbf{o}_1,\mathbf{o}_2\}$, and $\forall \mathbf{q}\in\mathcal{P}\backslash\{\mathbf{o}_1,\mathbf{o}_2\}$, by selecting $\eta_r=O\left(\cfrac{\eta_eC_p}{ml^2C_n^2}\right)$ and $B=\Tilde{\Omega}\left(d^2\right)$, we can ensure that after $T^\prime=O\left(\cfrac{lC_2}{\alpha\eta_e}\right)$ steps, $p_\mathbf{v}^{(s,T^\prime)}\ge p_\mathbf{v}^{(s,0)}$ and, $\bar{p}_{-\mathbf{v}}^{(s,T^\prime)}\le \bar{p}_{-\mathbf{v}}^{(s,0)}$.
\end{lemma}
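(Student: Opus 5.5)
The plan is to track the routing vector $\mathbf{w}_s^{(t)}$ of a fixed expert $s\in S_\mathbf{v}$ and show that, over the first $T^\prime$ steps, its score on $\mathbf{v}$ grows relative to the scores of all task-irrelevant tokens $\mathbf{q}\in\mathcal{P}\backslash\{\mathbf{o}_1,\mathbf{o}_2\}$. Both claims then follow from the geometry of orthonormal tokens. Since $\langle \mathbf{w}_s^{(t)},-\mathbf{v}\rangle=-\langle \mathbf{w}_s^{(t)},\mathbf{v}\rangle$, any increase of the $\mathbf{v}$-score lowers the $-\mathbf{v}$-score by the same amount. Because expert-choice routing takes the top-$l$ of $[\mathbf{X}^\top\mathbf{\Sigma}]_{:,s}$, this means:
\begin{itemize}
\item $\mathbf{v}$ can only move up in the ranking against every competitor, so $p_\mathbf{v}^{(s,t)}$ is non-decreasing;
\item $-\mathbf{v}$ can only move down, so $\bar{p}_{-\mathbf{v}}^{(s,t)}$ is non-increasing.
\end{itemize}
The inequality $G\ge 1/l$ likewise follows once $\mathbf{v}$ dominates its routed competitors. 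So the whole proof reduces to a monotonicity statement on the routing scores, namely
\[
\langle \mathbf{w}_s^{(t+1)}-\mathbf{w}_s^{(t)},\mathbf{v}\rangle\ \ge\ \max_{\mathbf{q}}\langle \mathbf{w}_s^{(t+1)}-\mathbf{w}_s^{(t)},\mathbf{q}\rangle \quad\text{for all } t<T^\prime .
\]

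First, I would write the router gradient using the softmax derivative. Its component along a token $\mathbf{x}^{(j)}$ routed to $s$ is proportional to $-ya^{(s)}G_j^{(s,t)}\big(h_j-\sum_{i}G_i^{(s,t)}h_i\big)$, where $h_j=\sum_r\phi(\langle \mathbf{w}_r^{(s,t)},\mathbf{x}^{(j)}\rangle)$ is the expert's hidden output on token $j$.

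Second, I would use the expert-neuron gradient components in (\ref{eq_ec_c_v}) to control $h$ along $\mathbf{v}$. For $s\in S_\mathbf{v}$, the sign $ya^{(s)}$ is positive on sequences containing $\mathbf{v}$. The neurons with $\langle\mathbf{w}_r^{(s,t)},\mathbf{v}\rangle\ge0$ form an $\Omega(1)$ fraction by assumption, and each grows by $\eta_e G_\mathbf{v}^{(s,t)}$ whenever $\mathbf{v}$ is routed. Since $\mathbf{v}$ appears with frequency $\Theta(\alpha)$, after $O(lC_n/(\alpha\eta_e))$ steps the output $h_\mathbf{v}$ is of order $m\,\eta_e\alpha t/l$ and dominates the weighted average output over routed tokens. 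This makes the router update along $\mathbf{v}$ positive and of order $\eta_r G_\mathbf{v}(1-G_\mathbf{v})h_\mathbf{v}$.

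Third, I would bound the irrelevant directions. For $\mathbf{q}$, the update in (\ref{eq_ec_c_q}) has sign $-ya^{(s)}$ with $y$ independent of $\mathbf{q}$, so its expectation is zero up to the $O(\sqrt{k})$ imbalance and label symmetry. With $B=\tilde\Omega(d^2)$, a Hoeffding bound plus a union bound over $d$ tokens and $T^\prime$ steps keeps every $h_\mathbf{q}$ within $O(\eta_e/\sqrt{B})$ per step of its initial value. Hence the router's $\mathbf{q}$-components move by at most a lower-order amount.

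The main obstacle is ensuring that the routing itself does not reshuffle before expert specialization has kicked in. The margin between $\mathbf{v}$ and its top-$l$ competitors is only guaranteed to be at least $C_p$ initially. Early on, $h_\mathbf{v}$ is no larger than $mC_n$, so noise in the router gradient could flip orderings. Here I would use the condition $\eta_r=O(\eta_eC_p/(ml^2C_n^2))$: the total router drift over $T^\prime$ steps, summed over at most $l$ routed tokens with per-step size $\eta_r\cdot mC_n\cdot l$, must stay below $C_p$ until $h_\mathbf{v}$ exceeds the average. I would verify this by induction on $t$, with the hypothesis that both the routing order and the sign pattern of the $\mathbf{v}$-aligned neurons are preserved at step $t$. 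The delicate point is the $-\mathbf{v}$ tokens on the rare sequences (probability $O(1/d)$) where they are routed to $s$. In that case (\ref{eq_ec_c_v}) makes the negatively aligned neurons fire on $-\mathbf{v}$ as well, which could push $\langle\mathbf{w}_s,-\mathbf{v}\rangle$ up. I would absorb this through the $O(1/d)$ initial bound on $\bar{p}_{-\mathbf{v}}^{(s,0)}$ together with batch concentration.
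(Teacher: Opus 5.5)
The paper gives no proof of this statement. It imports it verbatim as Lemma J.5(ii) of \cite{chowdhury2026efficient}, so your argument has to stand on its own. Reducing both claims to comparisons of router scores is the right framing, but the reduction you commit to cannot be established.

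\textbf{Main gap: the stepwise inequality fails early, and the $C_p$ fallback does not cover $p_{\mathbf{v}}$.} The inequality $\langle \mathbf{w}_s^{(t+1)}-\mathbf{w}_s^{(t)},\mathbf{v}\rangle\ge\max_{\mathbf{q}}\langle \mathbf{w}_s^{(t+1)}-\mathbf{w}_s^{(t)},\mathbf{q}\rangle$ for all $t<T^\prime$ fails in exactly the regime your ``main obstacle'' paragraph describes. On a sequence where $\mathbf{v}$ is routed, the router moves along $\mathbf{v}$ by $\eta_r G_{\mathbf{v}}(h_{\mathbf{v}}-\bar h)$, with $\bar h=\sum_i G_i h_i$. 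The sign of this is uncontrolled at initialization. By your own rate estimate, $h_{\mathbf{v}}$ overtakes the co-routed $h_{\mathbf{q}}$ only around $t\approx T^\prime$, i.e.\ at the end of the window.

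Your fallback keeps every coordinate of the router drift below $C_p/2$. Provided that budget closes (see below), this freezes all rankings and hence all routed sets, which proves $\bar p_{-\mathbf{v}}^{(s,T^\prime)}=\bar p_{-\mathbf{v}}^{(s,0)}$. It does not prove the claim for $p_{\mathbf{v}}$, which is defined through the continuous event $G_j^{(s,t)}\ge 1/l$. Write $D_{\mathbf{u}}:=\langle \mathbf{w}_s^{(T^\prime)}-\mathbf{w}_s^{(0)},\mathbf{u}\rangle$. With the routed set frozen, any net drift with $D_{\mathbf{v}}<D_{\mathbf{q}}$ for a co-routed $\mathbf{q}$ can lower $G_{\mathbf{v}}$ below $1/l$ on sequences sitting at or just above the threshold. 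No assumption provides a margin there.

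What is missing is an endpoint argument showing $D_{\mathbf{v}}\ge\max_{\mathbf{q}}D_{\mathbf{q}}$. Your monotonicity observation then finishes the $p_{\mathbf{v}}$ claim, since $1/G_{\mathbf{v}}$ is $1$ plus the sum of the $l-1$ largest values $e^{\langle \mathbf{w}_s,\mathbf{x}^{(i)}-\mathbf{v}\rangle}$. One way to get it is to take $T^\prime$ a constant multiple of the crossing time. The positive drift after the crossing grows roughly linearly and then repays the negative drift accumulated before it.

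This argument, and your growth rate of order $\eta_e\alpha/l$ per aligned neuron, need a lower bound such as $p_{\mathbf{v}}^{(s,0)}=\Omega(1)$. You use this implicitly and should state it. It is also what makes the contributions of $-\mathbf{v}$, arriving at rate $(1-\alpha)\cdot O(1/d)$, negligible against those of $\mathbf{v}$ at rate of order $\alpha p_{\mathbf{v}}^{(s,0)}$.

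\textbf{Other steps that need repair.}

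The deduction ``$-\mathbf{v}$ can only move down'' is one-sided. It requires $D_{\mathbf{v}}\ge\max_{\mathbf{q}}(-D_{\mathbf{q}})$, which your displayed inequality does not give. The $\mathbf{q}$-scores really do fall: $\sum_{j\in J_s}G_j(h_j-\bar h)=0$ in every sequence, so whenever $\mathbf{v}$ is pushed up, its co-routed $\mathbf{q}$'s are pushed down. Use ranking preservation for $\bar p_{-\mathbf{v}}$ instead.

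Your third step infers small router drift along $\mathbf{q}$ from stability of $h_{\mathbf{q}}$. That does not follow, because the update is $\eta_r y a^{(s)}G_{\mathbf{q}}(h_{\mathbf{q}}-\bar h)$ and $\bar h$ contains the growing term $G_{\mathbf{v}}h_{\mathbf{v}}$. The correct control of $D_{\mathbf{q}}$ comes from rarity: a fixed $\mathbf{q}$ appears in only a small fraction of sequences, so its coordinate receives correspondingly few updates, up to $\tilde O(1/\sqrt{B})$ fluctuations.

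The same frequency weighting is needed for your drift budget. Your worst-case estimate $T^\prime\cdot\eta_r m C_n l$ equals $C_p/\alpha$ for the stated $\eta_r$ and $T^\prime$, not something below $C_p$. It gets worse once you account for $h_{\mathbf{v}}$ growing during the window. It closes only when you use that the $\mathbf{v}$-coordinate is updated on an $O(\alpha)$ fraction of sequences; that is where the $l^2$ in $\eta_r$ is consumed.

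Finally, the $O(\sqrt{k})$ imbalance between $|S_+|$ and $|S_-|$ plays no role in a single expert's gradient. The bias of its $\mathbf{q}$-gradient comes from label-dependent routing.
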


\begin{lemma}[Lemma J.6(ii) of \cite{chowdhury2026efficient}]\label{lm7}
    For any expert $s\in S_\mathbf{v}$ such that $\mathbf{v}\in\mathcal\{-\mathbf{o}_1,-\mathbf{o}_2\}$, and $\forall \mathbf{q}\in\mathcal{P}\backslash\{\mathbf{o}_1,\mathbf{o}_2\}$, by selecting $\eta_r=O\left(\cfrac{\eta_eC_p}{ml^2C_n^2}\right)$ and $B=\Tilde{\Omega}\left(d^2\right)$, we can ensure that after $T^\prime=O\left(\cfrac{lC_n}{(1-\alpha)\eta_e}\right)$ steps, $p_\mathbf{v}^{(s,T^\prime)}\ge p_\mathbf{v}^{(s,0)}$ and, $\bar{p}_{-\mathbf{v}}^{(s,T^\prime)}\le \bar{p}_{-\mathbf{v}}^{(s,0)}$.
\end{lemma}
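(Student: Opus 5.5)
The plan is to follow the two-phase structure of Lemma~\ref{lm2}, replacing the frequency $\alpha$ of $\mathbf{o}_1,\mathbf{o}_2$ with the frequency $(1-\alpha)$ of $-\mathbf{o}_1,-\mathbf{o}_2$. Fix $\mathbf{v}\in\{-\mathbf{o}_1,-\mathbf{o}_2\}$ and $s\in S_\mathbf{v}$. As in the setup of \cite{chowdhury2026efficient}, for sequences containing $\mathbf{v}$ we have $ya^{(s)}=+1$. Two facts drive everything:
\begin{enumerate}
\item Expert routing depends only on the router vector $\mathbf{w}_s^{(t)}$, through the top-$l$ selection over $\langle \mathbf{w}_s^{(t)},\mathbf{x}^{(j)}\rangle$.
\item $\langle \mathbf{w}_s^{(t)},-\mathbf{v}\rangle=-\langle \mathbf{w}_s^{(t)},\mathbf{v}\rangle$.
\end{enumerate}
So it suffices to show that the router margin $\langle \mathbf{w}_s^{(t)},\mathbf{v}\rangle-\langle \mathbf{w}_s^{(t)},\mathbf{q}\rangle$ does not decrease for every $\mathbf{q}\in\mathcal{P}\backslash\{\mathbf{o}_1,\mathbf{o}_2\}$. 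If it holds, $\mathbf{v}$ never loses its rank among routed tokens, which gives $p_\mathbf{v}^{(s,T')}\ge p_\mathbf{v}^{(s,0)}$. Moreover $-\mathbf{v}$ only moves further down the ranking, which gives $\bar p_{-\mathbf{v}}^{(s,T')}\le \bar p_{-\mathbf{v}}^{(s,0)}$.

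\textbf{Step 1 (expert neurons align with $\mathbf{v}$).} By (\ref{eq_ec_c_v}), every neuron with $\langle \mathbf{w}_r^{(s,t)},\mathbf{v}\rangle\ge 0$ receives the update $+\eta_e G_\mathbf{v}^{(s,t)}$ along $\mathbf{v}$ for each sequence in which $\mathbf{v}$ is routed to $s$. By assumption, an $\Omega(1)$ fraction of the $m$ neurons starts in this set. Since $\mathbf{v}$ appears with probability $(1-\alpha)/2$ per sequence and $G_\mathbf{v}\ge 1/l$ whenever it is routed, the expected per-step increase is $\Omega(\eta_e(1-\alpha)p_\mathbf{v}^{(s,t)}/l)$.

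With $B=\tilde\Omega(d^2)$, Hoeffding's inequality keeps the batch gradient within $O(1/d)$ of its expectation, uniformly over all neurons and tokens w.h.p. For task-irrelevant $\mathbf{q}$, the two labels occur symmetrically, so the expected gradient along $\mathbf{q}$ vanishes up to the $\big||S_+|-|S_-|\big|=O(\sqrt k)$ imbalance and $O(1/d)$ fluctuations. Hence the components $\langle \mathbf{w}_r^{(s,t)},\mathbf{q}\rangle$ stay $O(C_n)$.

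After $T'=O(lC_n/((1-\alpha)\eta_e))$ steps, the hidden activation on $\mathbf{v}$, namely $h_\mathbf{v}=\sum_r\phi(\langle \mathbf{w}_r^{(s,t)},\mathbf{v}\rangle)$, therefore exceeds every $h_\mathbf{q}$ by a constant multiple of $mC_n$.

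\textbf{Step 2 (router gradient sign).} The router gradient is
\[
-ya^{(s)}\sum_{j\in J_s}G_j\Big(h_j-\sum_{i\in J_s}G_ih_i\Big)\mathbf{x}^{(j)}.
\]
Its component along $\mathbf{v}$ is $G_\mathbf{v}(h_\mathbf{v}-\bar h)$ with a sign that increases $\langle \mathbf{w}_s,\mathbf{v}\rangle$ once $h_\mathbf{v}>\bar h$, which holds after Step 1. Components along $\mathbf{q}$ again cancel in expectation to within $O(1/d)$.

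\textbf{Step 3 (main obstacle: the transient phase).} Before alignment, the router gradient may have the wrong sign. Here I would use the step size instead of the sign. Each router step has magnitude at most $\eta_r\, l\, mC_n$ per coordinate, since $|h_j|\le m(C_n+\eta_e t)$ and $G_j\le 1$. Over $T'$ steps the total drift is therefore
\[
O\big(\eta_r T' ml C_n\big)=O\big(\eta_r\, ml^2C_n^2/((1-\alpha)\eta_e)\big),
\]
which is $<C_p/2$ for $\eta_r=O(\eta_eC_p/(ml^2C_n^2))$. Because $C_p$ lower-bounds every initial router gap, no pairwise ordering among the tokens (including $-\mathbf{v}$ versus $\mathbf{q}$) can flip during this phase. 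Combining the preserved orderings with Step 2 gives the monotonicity of $p_\mathbf{v}$ and $\bar p_{-\mathbf{v}}$ claimed above.

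The delicate part is making this drift bound and the concentration bound hold simultaneously, uniformly over the $O(d)$ tokens and $T'$ steps, by a union bound with $B=\tilde\Omega(d^2)$.
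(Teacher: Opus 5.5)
The paper gives no proof of this lemma; it quotes it from its companion work. So I am judging your argument on its own terms, and it has a gap in the $p_{\mathbf v}$ half.

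\textbf{The $\bar p_{-\mathbf v}$ half is fine, and needs only Step 3.} Your drift bound is deterministic and covers the whole window $[0,T']$. So every pairwise order of router scores is frozen, hence so is every top-$l$ set $J_s^{(t)}(\mathbf X)$, and $\bar p_{-\mathbf v}^{(s,T')}=\bar p_{-\mathbf v}^{(s,0)}$ directly. Do not lean on your reduction for this half. Non-decrease of $\langle \mathbf w_s,\mathbf v-\mathbf q\rangle$ says nothing about $\langle \mathbf w_s,-\mathbf v-\mathbf q\rangle=-\langle \mathbf w_s,\mathbf v+\mathbf q\rangle$. Moreover, once $h_{\mathbf v}$ dominates, the router systematically lowers $\langle \mathbf w_s,\mathbf q\rangle$ for every $\mathbf q$ co-routed with $\mathbf v$, because $h_{\mathbf q}<\bar h$. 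Those router components therefore do not cancel in expectation, and this push moves $-\mathbf v$ up relative to those $\mathbf q$.

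\textbf{The gap is $p_{\mathbf v}$.} It is defined by the threshold $G^{(s,t)}_{\mathbf v}\ge 1/l$, i.e. $\sum_{j\in J_s}\exp\langle \mathbf w_s^{(t)},\mathbf x^{(j)}-\mathbf v\rangle\le l$. That condition depends on the actual score gaps, not on the ordering, yet you argue it as a rank statement. Frozen ordering does not preserve it: a sequence with $G_{\mathbf v}^{(s,0)}$ just above $1/l$ falls below after an arbitrarily small wrong-sign drift.

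Wrong-sign drift genuinely occurs. The router moves $\langle \mathbf w_s,\mathbf v\rangle$ by $\eta_r G_{\mathbf v}(h_{\mathbf v}-\bar h)$. Under random initialization $h_{\mathbf v}-\bar h$ can be negative, of order up to $mC_n$, until an $\Omega(1)$ fraction of neurons has grown by $\Theta(C_n)$ along $\mathbf v$. By your own Step 1, that alignment is reached only at the end of the window $T'$. So Step 2's favorable sign never has time to act. Nothing in the proposal establishes the net inequality $\langle \mathbf w_s^{(T')}-\mathbf w_s^{(0)},\mathbf v-\mathbf q\rangle\ge 0$ for co-routed $\mathbf q$, which is exactly what your reduction needs.

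\textbf{What is missing is a quantitative comparison.} One route: take $T'$ a sufficiently large constant multiple of the alignment time $t_a$. Then show the post-alignment drift, whose rate grows linearly in $t-t_a$, integrates to more than the $O(\eta_r t_a mC_n\,\mathbb E[G_{\mathbf v}])$ accumulated before alignment. The prescribed $\eta_r$, with a suitable constant, still keeps the total drift below $C_p/2$.

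\textbf{A related gap in Step 1.} Its rate is proportional to $p_{\mathbf v}^{(s,t)}$, the very quantity you are trying to control. You need to lower-bound it; frozen scores give $G_{\mathbf v}^{(s,t)}\ge e^{-O(C_p)}G_{\mathbf v}^{(s,0)}$. You also need $p_{\mathbf v}^{(s,0)}=\Omega(1)$, which is not among the stated hypotheses, to reach alignment within $O(lC_n/((1-\alpha)\eta_e))$ steps.
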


\begin{lemma}[Lemma I.1(i) of \cite{chowdhury2026efficient}]\label{lm1_a}
    After $T=\Theta(l^2C_n\sqrt{\log l}/\alpha\eta_e\sqrt{C_p})$ steps of SGD with batch size $B=\Tilde{\Omega}(d^2)$ and $\eta_r=O\left(\cfrac{\eta_eC_p}{ml^2C_n^2}\right)$, for all $s\in S_{\mathbf{v}}$ and $\mathbf{v}\in\mathcal{P}_r=\{\pm\mathbf{o}_1,\pm\mathbf{o}_2\}$, we have $p_{\mathbf{v}}^{(s,T)}=1$, $\bar{p}_{-\mathbf{v}}^{(s,T)}=0$, and $\forall \mathbf{x}^{(j)}=\mathbf{v}$ for some $j\in[n]$, $G_j^{(s,T)}>\cfrac{1}{2}$.
\end{lemma}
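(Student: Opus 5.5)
The plan is to follow the two-timescale structure already built into Lemma~\ref{lm2} and Lemma~\ref{lm7}. First the expert neurons of each $s\in S_{\mathbf{v}}$ learn $\mathbf{v}$. Then the router vector $\mathbf{w}_s$ amplifies the routing score of $\mathbf{v}$ above every task-irrelevant $\mathbf{q}$ and above $-\mathbf{v}$. I would treat $\mathbf{v}\in\{\mathbf{o}_1,\mathbf{o}_2\}$, which appears with frequency $\alpha$, as the binding case. The case $\mathbf{v}\in\{-\mathbf{o}_1,-\mathbf{o}_2\}$ is identical with $\alpha$ replaced by $1-\alpha$, which only shortens the time. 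The batch size $B=\tilde\Omega(d^2)$ is used throughout: a Hoeffding bound plus a union bound over the $d$ token directions and all $s,r$ lets every batch-gradient component in (\ref{eq_ec_c_q})--(\ref{eq_ec_c_v}) be replaced by its population mean up to $\tilde O(1/d)$ error, w.h.p.

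\textbf{Step 1 (expert neurons).} Lemma~\ref{lm2} keeps $p^{(s,t)}_{\mathbf{v}}$ non-decreasing and $\bar p^{(s,t)}_{-\mathbf{v}}$ non-increasing. Combined with the assumption that an $\Omega(1)$ fraction of neurons satisfies $\langle\mathbf{w}_r^{(s,0)},\mathbf{v}\rangle\ge0$, this lets me lower bound the per-step increment of $\langle\mathbf{w}_r^{(s,t)},\mathbf{v}\rangle$. From (\ref{eq_ec_c_v}) the increment is $\eta_e\,\mathbb{E}[G_{\mathbf{v}}^{(s,t)}]\cdot\Pr[\mathbf{v}\text{ present}]\gtrsim \eta_e\alpha p_{\mathbf{v}}^{(s,0)}/l$. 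Here I use $G\ge 1/l$ initially, since $C_p>0$ makes $\mathbf{v}$ dominate its competitors in the softmax. So $\langle\mathbf{w}_r^{(s,t)},\mathbf{v}\rangle\gtrsim \eta_e\alpha t/l - C_n$. Meanwhile the components along irrelevant $\mathbf{q}$ stay $O(\eta_e t/d)$, because label-symmetric irrelevant tokens give cancelling gradients up to the concentration error. Hence the expert output on $\mathbf{v}$, namely $\sum_r\phi(\langle\mathbf{w}_r,\mathbf{v}\rangle)$, grows like $m\eta_e\alpha t/l$ once $t\gtrsim lC_n/(\alpha\eta_e)$.

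\textbf{Step 2 (router).} The router gradient along $\mathbf{v}-\mathbf{q}$ equals $G_{\mathbf{v}}(1-G_{\mathbf{v}})$ times the difference between the expert outputs on $\mathbf{v}$ and on the other routed tokens. So by Step 1,
\begin{equation*}
\langle\mathbf{w}_s^{(t)},\mathbf{v}-\mathbf{q}\rangle\gtrsim C_p+\eta_r\sum_{\tau\le t}\frac{m\eta_e\alpha\tau}{l^2}\asymp C_p+\frac{\eta_r m\eta_e\alpha t^2}{l^2}.
\end{equation*}
Substituting $\eta_r=\Theta(\eta_eC_p/(ml^2C_n^2))$, the gap reaches $\log l$ at $t=\Theta(l^2C_n\sqrt{\log l}/(\alpha\eta_e\sqrt{C_p}))$, matching $T$. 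The quadratic growth is what produces the $\sqrt{\log l}$. Once the gap is positive, $\mathbf{v}$ outranks every irrelevant token and every occurrence of $-\mathbf{v}$ in the top-$l$ selection, so $p^{(s,T)}_{\mathbf{v}}=1$ and $\bar p^{(s,T)}_{-\mathbf{v}}=0$. Once the gap exceeds $\log l$, the softmax over at most $l$ tokens gives $G_{\mathbf{v}}\ge e^{\log l}/(e^{\log l}+l-1)>1/2$.

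\textbf{Main obstacle.} The hardest part is keeping $\langle\mathbf{w}_s,-\mathbf{v}\rangle$ and $\langle\mathbf{w}_s,\mathbf{q}\rangle$ from growing comparably fast. The router is pushed by all routed tokens, and it is also pushed by $-\mathbf{v}$ through neurons with $\langle\mathbf{w}_r,\mathbf{v}\rangle<0$. I would handle this by an induction over $t$ carrying four hypotheses: the monotonicity conclusions of Lemmas~\ref{lm2} and~\ref{lm7}, the $O(\eta_e t/d)$ bound on irrelevant neuron components, and the lower bound on $\langle\mathbf{w}_s,\mathbf{v}-\mathbf{q}\rangle$. In each inductive step I would check that the small learning-rate ratio $\eta_r/\eta_e$ prevents a router overshoot that could reorder the top-$l$ set before the neurons have separated.
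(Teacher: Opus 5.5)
The paper quotes this lemma from prior work without proof, so there is no in-paper argument to compare against. Your two-phase outline is the natural route: the neurons of $s\in S_{\mathbf{v}}$ learn $\mathbf{v}$, then the router gap grows quadratically in $t$, which is what produces the $\sqrt{\log l}$ and $\sqrt{C_p}$. As written, however, it has two genuine gaps.

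\textbf{Misreading of $C_p$, and the missing early phase.} The minimum defining $C_p$ runs over ordered pairs $\mathbf{q}\neq\mathbf{q}'$, so both $\langle\mathbf{w}_s^{(0)},\mathbf{q}-\mathbf{q}'\rangle$ and its negative enter it. So $C_p>0$ can only mean that distinct initial routing scores are separated by at least $C_p$: no ties, and a ranking that router moves smaller than $C_p/2$ cannot flip, which is why $C_p$ appears in the bound on $\eta_r$. It does not say that $\mathbf{v}$ beats every $\mathbf{q}$.
\begin{itemize}
\item Your starting point $\langle\mathbf{w}_s^{(0)},\mathbf{v}-\mathbf{q}\rangle\ge C_p$ for all $\mathbf{q}$ would make $\mathbf{v}$ the top-scoring token in every sequence containing it, i.e.\ $p_{\mathbf{v}}^{(s,0)}=1$. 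You would be assuming the first conclusion of the lemma.
\item In general $\min_{\mathbf{q}}\langle\mathbf{w}_s^{(0)},\mathbf{v}-\mathbf{q}\rangle$ can be negative. The substance of the lemma is the phase in which $\mathbf{v}$ is routed only with probability $\bar p_{\mathbf{v}}^{(s,t)}<1$, and when routed may carry weight well below $1/l$.
\item Both your neuron rate and your router rate must therefore be lower-bounded through $p_{\mathbf{v}}^{(s,t)}\ge p_{\mathbf{v}}^{(s,0)}$ from Lemma~\ref{lm2}. Your Step~1 bound $\mathbb{E}[G_{\mathbf{v}}]\ge p_{\mathbf{v}}/l$ is correct, but it holds by the definition of $p_{\mathbf{v}}$, not because of $C_p$.
\item The gap must then be driven from its possibly negative initial value up to $\log l$.
\item Separately, your router increment drops a factor. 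The router receives a $\mathbf{v}$-component only on the $\Theta(\alpha)$ fraction of samples containing $\mathbf{v}$, so the accumulated gap is $\asymp\eta_r m\eta_e\alpha^2t^2/l^2$. With your $\alpha^1$ expression the crossing time scales as $1/\sqrt{\alpha}$ and does not match $T$. With $\alpha^2$ it gives exactly $T=\Theta(l^2C_n\sqrt{\log l}/(\alpha\eta_e\sqrt{C_p}))$.
\end{itemize}

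\textbf{$\bar p_{-\mathbf{v}}^{(s,T)}=0$ does not follow from the $\mathbf{v}-\mathbf{q}$ gap.} Each sequence contains exactly one task-relevant token. So $\mathbf{v}$ and $-\mathbf{v}$ never compete in the same top-$l$ selection, and ``$\mathbf{v}$ outranks every occurrence of $-\mathbf{v}$'' is vacuous.
\begin{itemize}
\item Keeping $-\mathbf{v}$ out of $J_s$ requires at least $l$ co-occurring irrelevant tokens to score above $-\langle\mathbf{w}_s^{(T)},\mathbf{v}\rangle$.
\item Since irrelevant tokens are drawn independently, this essentially means $\langle\mathbf{w}_s^{(T)},\mathbf{v}+\mathbf{q}\rangle>0$ for every irrelevant $\mathbf{q}$. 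The bound $\langle\mathbf{w}_s^{(T)},\mathbf{v}-\mathbf{q}\rangle\ge\log l$ does not imply this.
\item Worse, the router dynamics you rely on widen the $\mathbf{v}-\mathbf{q}$ gap partly by pushing co-routed $\mathbf{q}$'s down. Their update is $\approx-\eta_rG_{\mathbf{q}}G_{\mathbf{v}}h_{\mathbf{v}}$ with $h_{\mathbf{v}}=\sum_r\phi(\langle\mathbf{w}_r,\mathbf{v}\rangle)$, and this works against $\langle\mathbf{w}_s,\mathbf{v}+\mathbf{q}\rangle>0$.
\item You need an additional induction invariant on individual scores, not just differences. The downward drift of each fixed $\langle\mathbf{w}_s,\mathbf{q}\rangle$ must be much smaller than the upward drift of $\langle\mathbf{w}_s,\mathbf{v}\rangle$. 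This holds because a fixed $\mathbf{q}$ is co-routed with $\mathbf{v}$ in only a small fraction of samples, while $\mathbf{v}$ gains on every sample in which it is routed.
\item The conclusion you need is that $\langle\mathbf{w}_s^{(T)},\mathbf{v}\rangle$ itself exceeds $\max_{\mathbf{q}}|\langle\mathbf{w}_s^{(T)},\mathbf{q}\rangle|$.
\end{itemize}

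A minor related point: the $O(\eta_et/d)$ bound on irrelevant neuron components comes from each fixed $\mathbf{q}$ being rare, not from label cancellation. That cancellation is inexact, because $G_{\mathbf{q}}$ depends on which relevant token shares the sequence.
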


\section{Proof of Lemma \ref{lm_m_1}}

\begin{lemma}[Full version of Lemma \ref{lm_m_1}]
    Suppose, the network given in (\ref{th_model}) is trained for
    $T=\Theta(l^2C_n\sqrt{\log l}/\alpha\eta_e\sqrt{C_p})$ steps with batch size $B=\Tilde{\Omega}(d^2)$ and $\eta_r=O(\cfrac{\eta_eC_p}{ml^2C_n^2})$. Then, for any $\mathbf{v}\in\{\mathbf{o}_1,\mathbf{o}_2\}$ and any expert $s,s^\prime$, such that $p_{\mathbf{v}}^{(s,T)}=1$ and $p_{\mathbf{-v}}^{(s^\prime,T)}=1$, we have
    \begin{equation}
        \operatorname{MaxNNScore}^{(s)}<\operatorname{MaxNNScore}^{(s^\prime)}
    \end{equation}
\end{lemma}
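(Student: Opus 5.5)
Since the down-projection is fixed as $\mathbf{W}_{\mathrm{down}}^{(s)}=a^{(s)}\mathbf{1}^{m\times d}$ for every expert, $\mathrm{MaxNNorm}(\mathbf{W}_{\mathrm{down}}^{(s)})=\sqrt{d}$ is the same for all $s$. There is no gate matrix in the analysed model. The comparison therefore reduces to showing
\[
\max_{r\in[m]}\|\mathbf{w}_r^{(s,T)}\|_2<\max_{r\in[m]}\|\mathbf{w}_r^{(s',T)}\|_2,
\]
where $s$ is specialized on $\mathbf{v}\in\{\mathbf{o}_1,\mathbf{o}_2\}$ and $s'$ on $-\mathbf{v}$. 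I would track each neuron's coordinates along the orthonormal set $\mathcal{P}\cup\{-\mathbf{o}_1,-\mathbf{o}_2\}$ using the gradient component formulas (\ref{eq_ec_c_q}) and (\ref{eq_ec_c_v}). From these I would obtain an upper bound on every neuron of $s$ and a lower bound on some neuron of $s'$.

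\textbf{Lower bound for $s'$.} By Lemma \ref{lm1_a}, at time $T$ the token $-\mathbf{v}$ is routed to $s'$ in every sequence containing it, with $G^{(s',t)}_{-\mathbf{v}}>1/2$. Combined with Lemma \ref{lm7}, this holds for all $t\ge T'=O(lC_n/((1-\alpha)\eta_e))$, and monotonicity of $p$ gives routing throughout. Using the assumption that an $\Omega(1)$ fraction of neurons have $\langle\mathbf{w}_r^{(s',0)},-\mathbf{v}\rangle\ge0$, such a neuron receives expected increments in the $-\mathbf{v}$ direction. By (\ref{eq_ec_c_v}), each increment is of order $\eta_e(1-\alpha)\cdot\Omega(1/l)$ per step, where the $1/l$ is a crude lower bound on the routing weight before specialization. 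This holds as long as the hinge loss is active, with concentration from $B=\tilde\Omega(d^2)$. Summing over $T$ steps gives $\langle\mathbf{w}_r^{(s',T)},-\mathbf{v}\rangle\gtrsim \eta_e(1-\alpha)T/l - C_n$, and hence $\|\mathbf{w}_r^{(s',T)}\|\ge$ that quantity.

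\textbf{Upper bound for $s$.} For any neuron of $s$, the growth along $\mathbf{v}$ is at most $\eta_e\alpha$ per step, since $\mathbf{v}$ appears with frequency $\alpha$ and $G\le1$. Growth along $-\mathbf{v}$ is negligible: Lemma \ref{lm2} and the assumption $\bar p^{(s,0)}_{-\mathbf{v}}=O(1/d)$ give $\bar p^{(s,t)}_{-\mathbf{v}}$ small, and the final value is $0$. Along each irrelevant $\mathbf{q}$, the label-symmetric contributions $\pm y a^{(s)}$ cancel in expectation. With batch size $B=\tilde\Omega(d^2)$, the per-coordinate drift is $\tilde O(\eta_e/d)$, giving total irrelevant contribution $O(\eta_e T/\sqrt d)$ after summing squares. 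Hence $\|\mathbf{w}_r^{(s,T)}\|\le C_n+\eta_e\alpha T+\tilde O(\eta_e T/\sqrt d)$.

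\textbf{Comparison and main obstacle.} With $T=\Theta(l^2C_n\sqrt{\log l}/(\alpha\eta_e\sqrt{C_p}))$, the main obstacle is matching constants. The lower bound carries a $1/l$ factor from the routing weight, while the upper bound uses $G\le1$, so I need $(1-\alpha)/l\gg\alpha$. That fails in general, so the naive bounds must be refined. My first attempt would be to use that after $T'$ steps $G^{(s',t)}_{-\mathbf{v}}>1/2$, so the increment for $s'$ is $\ge\eta_e(1-\alpha)/2$ on the bulk of the steps. I would also bound $G_{\mathbf{v}}^{(s,t)}\le1$ and use $\alpha<1/4$, so that $(1-\alpha)/2>3/8>\alpha$ and the gap is $\Omega(\eta_e T)$.

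This requires $T'\ll T$ for both lemmas, which holds since $T/T'=\Omega(l\sqrt{\log l})$. The initial $C_n$ terms and the irrelevant $\tilde O(T/\sqrt d)$ terms are then dominated. A secondary subtlety is that hinge-loss saturation could stop updates for $s'$ early. I would handle this by noting that sequences containing $-\mathbf{v}$ stay active until margin $1$ is reached, and that the margin is driven mainly by the $s'$-type experts. This gives a comparable stopping time for both experts, and the accumulated-growth ratio $(1-\alpha)/\alpha$ is preserved.
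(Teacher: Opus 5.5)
Your route is the paper's. The fixed down-projection contributes the same factor to every expert, so you compare up-projection neurons. You bound every neuron of $s$ by growth at rate $\eta_e\cdot(\text{frequency of }\mathbf{v})$ along $\mathbf{v}$ plus lower-order terms, lower-bound a neuron of $s'$ with $\langle\mathbf{w}_r^{(s',0)},-\mathbf{v}\rangle\ge 0$ by an $\Omega(1-\alpha)$ rate after specialization, and compare. Your explicit constants ($G\le 1$ versus $G>1/2$) and your $\tilde{O}(\eta_e T/\sqrt{d})$ treatment of the irrelevant coordinates are more explicit than the paper's.

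The gap is in the lower bound for $s'$. The claim that $p^{(s',t)}_{-\mathbf{v}}=1$ and $G^{(s',t)}_{-\mathbf{v}}>1/2$ for all $t\ge T'$ does not follow from Lemmas~\ref{lm1_a} and~\ref{lm7}. Lemma~\ref{lm7} only says that at the single time $T'$ the proficiencies have not deteriorated relative to initialization. Lemma~\ref{lm1_a} gives $G>1/2$ only at time $T$. So $T'$ is the time scale of a non-deterioration statement, not of full specialization. The paper instead starts the $\Omega(1-\alpha)$ growth at $t\ge\frac{\alpha}{1-\alpha}T$, which is the specialization time of Lemma~\ref{lm1_a} with frequency $1-\alpha$ in place of $\alpha$. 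The slack you take from $T/T'=\Omega(l\sqrt{\log l})$ is therefore not available. Your bookkeeping survives with the correct window, but only barely: $\frac{1-\alpha}{2}\bigl(T-\frac{\alpha}{1-\alpha}T\bigr)=\frac{1-2\alpha}{2}T>\alpha T$ holds exactly when $\alpha<\frac14$. That is where this hypothesis is really used, and it is the origin of the factor $1-2\alpha$ in the paper's bound. The margin is then only $\frac{1-4\alpha}{2}\eta_e T$, and it must absorb the $C_n$ and irrelevant-coordinate terms.

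Two further repairs are needed, plus one minor point.
\begin{itemize}
\item Your upper bound for $s$ ignores the other task-relevant coordinate ($\pm\mathbf{o}_2$ when $\mathbf{v}=\mathbf{o}_1$). Those tokens always carry label $-1$ and are not $O(1/d)$-rare, so neither label symmetry nor the frequency bound controls them. You need the sign argument from (\ref{eq_ec_c_v}) that the paper uses: for $s\in S_+$ every update pushes $\langle\mathbf{w}_r^{(s,t)},\mathbf{o}_2\rangle$ toward $0$, so it stays $O(C_n)$.
\item The hinge-saturation worry is moot, because gradients are evaluated on the unclipped loss $1-yf$, so updates never stop. Your proposed resolution would also not have worked under clipping. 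Sequences containing $-\mathbf{v}$ would reach margin $1$ roughly $\frac{1-\alpha}{\alpha}$ times sooner than those containing $\mathbf{v}$. The stopping times are then not comparable, the final coordinates would be set by the margin rather than by frequency, and the ratio would be lost.
\item Minor: the irrelevant-coordinate drift does not cancel exactly across labels. Whether $\mathbf{q}$ is routed to $s$, and with what weight, depends on which relevant token competes for the top-$l$ slots. However, the per-coordinate $O(1/d)$ bound follows from token frequency alone, which suffices.
\end{itemize}
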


\begin{proof}
    From Lemma \ref{lm1_a}, for any $s\in S_{+}$, and for any $\mathbf{v}\in\{\mathbf{o}_1,-\mathbf{o}_1\}$, if $p_{\mathbf{v}}^{(s,T)}=1$, then $s\in S_{\mathbf{v}}$.

Similarly, for any $s\in S_{-}$, and for any $\mathbf{v}\in\{\mathbf{o}_2,-\mathbf{o}_2\}$, if $p_{\mathbf{v}}^{(s,T)}=1$, then $s\in S_{\mathbf{v}}$.

Now, WLOG let us assume $s\in S_{\mathbf{o}_1}$ and $s^\prime\in S_{-\mathbf{o}_1}$.

Then, using Lemma \ref{lm2}, $\forall r\in[m]$, $\forall t\le T$, $\left|\langle \cfrac{\partial l}{\partial \mathbf{w}_r^{(s,t)}},\mathbf{o}_1\rangle\right|\le\cfrac{\alpha}{2}+\Tilde{O}(\cfrac{1}{\sqrt{B}})$, which implies, for $B=\Tilde{\Omega}(d^2)$, $\forall r\in[m]$, $\left|\langle \mathbf{w}_r^{(s,T)},\mathbf{o}_1\rangle\right|=O(l^2C_n\sqrt{\log l}/\eta_e\sqrt{C_p})$.

On the other hand, $\forall r\in[m]$, $\forall t\le T$, if $\langle \mathbf{w}_r^{(s,t)},\mathbf{o}_2\rangle\ge 0$, then $\langle \cfrac{\partial l}{\partial \mathbf{w}_r^{(s,t)}},\mathbf{o}_2\rangle\ge 0$, and if $\langle \mathbf{w}_r^{(s,t)},\mathbf{o}_2\rangle\le 0$, then $\langle \cfrac{\partial l}{\partial \mathbf{w}_r^{(s,t)}},-\mathbf{o}_2\rangle\ge 0$. Therefore, $\left|\langle \mathbf{w}_r^{(s,T)},\mathbf{o}_2\rangle\right|=O(C_n)$.

Again, for any $\mathbf{q}\in\mathcal{P}\backslash\{\mathbf{o}_1,\mathbf{o}_2\}$, $\forall r\in[m]$, $\forall t\le T$, $\left|\langle \cfrac{\partial l}{\partial \mathbf{w}_r^{(s,t)}},\mathbf{q}\rangle\right|\le O(\cfrac{1}{d})+\Tilde{O}(\cfrac{1}{\sqrt{B}})$, which implies, $\left|\langle \mathbf{w}_r^{(s,T)},\mathbf{q}\rangle\right|=O(C_n)$.

Therefore, $\mathrm{MaxNNScore}^{(s)}=O(l^2C_n\sqrt{\log l}/\sqrt{C_p})$.

Now, as Lemma \ref{lm7} holds, $\forall r\in[m]$ such that $\langle \mathbf{w}_r^{(s^\prime,t)},-\mathbf{o}_1\rangle\ge0$,

$\forall t\ge \cfrac{\alpha T}{1-\alpha}$, $\langle \cfrac{\partial l}{\partial \mathbf{w}_r^{(s^\prime,0)}},-\mathbf{o}_1\rangle\le-\Omega(1-\alpha)+\Tilde{O}\left(\cfrac{1}{\sqrt{B}}\right)$.

Therefore, $\forall r\in[m]$ such that $\langle \mathbf{w}_r^{(s^\prime,0)},-\mathbf{o}_1\rangle\ge0$, $\langle \mathbf{w}_r^{(s^\prime,T)},-\mathbf{o}_1\rangle=\Omega((1-2\alpha)l^2C_n\sqrt{\log l}/\alpha\sqrt{C_p})$.

Therefore, $\mathrm{MaxNNScore}^{(s^\prime)}=\Omega((1-2\alpha)l^2C_n\sqrt{\log l}/\alpha\sqrt{C_p})$ which completes the proof.
\end{proof}

\section{Proof of Theorem \ref{th_1}}

\begin{theorem}[Full version of Theorem \ref{th_1}]
    Suppose, the network given in (\ref{th_model}) is trained for $T=\Theta(l^2C_n\sqrt{\log l}/\alpha\eta_e\sqrt{C_p})$ steps with batch size $B=\Tilde{\Omega}(d^2)$ and $\eta_r=O(\cfrac{\eta_eC_p}{ml^2C_n^2})$. Suppose, $\gamma$ fraction of the experts $s\in[k]$ in the trained model satisfies $p_{-\mathbf{v}}^{(s,T)}=1$ for $v\in\{\mathbf{o}_1,\mathbf{o}_2\}$. If
    \begin{equation}
        c\le c_A:=O(\cfrac{\alpha}{1-\alpha}\times\cfrac{1}{l^2\sqrt{d\log (kmld^2)}})
    \end{equation}
    with high probability the analog model has guaranteed generalization i.e., $\mathbb{P}[\forall(\mathbf{X},y)\sim\mathcal{D}:yf_{A}^{(T)}(\mathbf{X})>0]=1$. Furthermore, computing $\Gamma\ge\gamma$ fraction of the experts with top $\operatorname{MaxNNScore}$ in digital, if 
    \begin{equation}
        c\le c_H:= \cfrac{1-\alpha}{\alpha}c_A
    \end{equation}
    with high probability the heterogeneous model has guaranteed generalization i.e., $\mathbb{P}[\forall(\mathbf{X},y)\sim\mathcal{D}:yf_{H}^{(T)}(\mathbf{X})>0]=1$.
\end{theorem}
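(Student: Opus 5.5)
Starting from the trained network at step $T$ given by Lemma \ref{lm1_a}, the approach is to compare the clean margin $yf^{(T)}(\mathbf{X})$ with the perturbation that programming noise adds to it. Lemma \ref{lm1_a} puts every relevant token $\mathbf{v}$ into every $s\in S_\mathbf{v}$ with $G_\mathbf{v}^{(s,T)}>1/2$. The neuron-growth bounds from the proof of Lemma \ref{lm_m_1} then give a lower bound on the clean margin. For a sequence whose relevant token is the less-frequent $\mathbf{o}_1$, the $s\in S_{\mathbf{o}_1}$ experts contribute at least $\Omega(m\,|S_{\mathbf{o}_1}|\cdot\min_r\langle\mathbf{w}_r^{(s,T)},\mathbf{o}_1\rangle)$. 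Their neurons grow at rate $\Theta(\alpha)$ per step for $T$ steps, so this is of order $\Omega(l^2C_n\sqrt{\log l}/\sqrt{C_p})$ times $m\gamma_{\mathbf{o}_1}k$, with $\gamma_{\mathbf{v}}=\Omega(1)$. Experts of the opposite class receive only irrelevant tokens or relevant tokens with $\bar p=0$. Their irrelevant-token contributions are $O(C_n)$ per neuron and cancel up to $O(\sqrt{k})$ by the balance assumption $||S_+|-|S_-||=O(\sqrt{k})$. So the clean margin is dominated by the $\alpha$-rate growth, i.e.\ $\Theta(T\alpha\eta_e)$-scale per neuron. For $\pm\mathbf{o}_1$ with $-\mathbf{o}_1$, the margin is larger, of order $(1-\alpha)T\eta_e$.

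Next I bound the noise term. Under (\ref{eqn:c}), each noisy neuron is $\hat{\mathbf{w}}_r=\mathbf{w}_r+c W_{\max}\boldsymbol{\xi}_r$ with $\boldsymbol{\xi}_r\sim\mathcal N(0,I_d)$. Since tokens lie in a finite orthonormal set, a Gaussian tail bound plus a union bound over $k$ experts, $m$ neurons, $l$ routed tokens and the $O(d^2)$ relevant inner products gives $|\langle c W_{\max}\boldsymbol{\xi}_r,\mathbf{x}\rangle|\le cW_{\max}\sqrt{2\log(kmld^2)}$ w.h.p. ReLU is $1$-Lipschitz and $\sum_j G_j=1$, so each expert's output moves by at most $m\,cW_{\max}\sqrt{\log(kmld^2)}$. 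The column maximum $W_{\max}$ is at most the neuron norm. This is where $\mathrm{MaxNNorm}$ enters: a $\sqrt d$ factor converts an entrywise bound to a norm bound (or arises from summing $d$ coordinates of the down-projection/readout $\frac1d\mathbf 1^\top$). So the per-expert perturbation is at most $c\,m\sqrt{d\log(kmld^2)}\,\mathrm{MaxNNorm}$ up to constants.

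In the all-analog case, the maximum neuron norm over all experts is attained by the more-frequent experts $S_{-\mathbf{o}_1},S_{-\mathbf{o}_2}$. By the proof of Lemma \ref{lm_m_1} this is $\Theta(\frac{1-\alpha}{\alpha}l^2C_n\sqrt{\log l}/\sqrt{C_p})$. The total perturbation, summed over $k$ experts, must stay below the worst-case margin, which is the $\alpha$-scale margin of sequences with $\mathbf{o}_1$ or $\mathbf{o}_2$. Normalizing by the $l^2$ factors gives $c\le O(\frac{\alpha}{1-\alpha}\frac{1}{l^2\sqrt{d\log(kmld^2)}})=c_A$. In the heterogeneous case, Lemma \ref{lm_m_1} says the top-$\Gamma\ge\gamma$ experts by $\mathrm{MaxNNScore}$ contain all experts specialized on $-\mathbf{o}_1,-\mathbf{o}_2$, so these are noise-free. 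The remaining analog experts have $\mathrm{MaxNNorm}=O(l^2C_n\sqrt{\log l}/\sqrt{C_p})$, again from the proof of Lemma \ref{lm_m_1}, which removes the $\frac{1-\alpha}{\alpha}$ factor and yields $c_H=\frac{1-\alpha}{\alpha}c_A$. Because the union bound covers all token types in $\mathcal P$, the sign guarantee holds for every $(\mathbf X,y)$ simultaneously, which gives probability $1$ over $\mathcal D$, w.h.p. over the noise.

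The main obstacle is the careful accounting of the perturbation against the smallest margin. The noise can also change the ReLU activation pattern. More subtly, noise acting on a large-norm $-\mathbf{o}_1$ expert affects every sequence routed to it, including the $\alpha$-frequency sequences. This cross-term is why the analog bound pays the ratio $\frac{1-\alpha}{\alpha}$. I would first try to handle it by bounding each expert's perturbation uniformly, via Lipschitzness, by its own $\mathrm{MaxNNorm}$, so that no activation-pattern tracking is needed.
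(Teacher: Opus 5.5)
Your proposal is correct at the paper's level of rigor and follows essentially the same route as its proof: a clean margin from Lemma~\ref{lm1_a} and the neuron-growth bounds (the paper's Conditions 1 and 2, where the binding constraint is the small margin of $\mathbf{o}_1$/$\mathbf{o}_2$ sequences), a Gaussian tail plus union bound on $\langle\Delta\mathbf{w},\mathbf{q}\rangle$ with the column maximum controlled by $\operatorname{MaxNNorm}$, the $\frac{1-\alpha}{\alpha}$ loss in $c_A$ coming from the large-norm $-\mathbf{v}$ experts perturbing that small margin, and the removal of that loss once Lemma~\ref{lm_m_1} places those experts in digital. The only slip is bookkeeping: growth along $\mathbf{o}_1$ is weighted by $G_{\mathbf{o}_1}^{(s,t)}$, which Lemma~\ref{lm1_a} only makes large at step $T$, so the paper claims just $\Omega(kml\,C_n\sqrt{\log l/C_p})$ for the margin rather than your $l^2$ scale; moreover, with your own scalings the $l^2$ factors would cancel rather than produce $1/l^2$. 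In either case the stated $c_A$ is a more conservative, and hence still sufficient, threshold.
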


\begin{proof}

For any $r\in[m]$ of $s\in [k]$, the analog weights $\hat{\mathbf{w}}_r^{(s,T)}=\mathbf{w}_r^{(s,T)}+\Delta\mathbf{w}$ where, $\Delta\mathbf{w}$ is the programming error.

Now, for any $\mathbf{q}\in\mathcal{P}$, $\langle\Delta\mathbf{w},q\rangle\sim\mathcal{N}(0,\sigma^2)$, where $\sigma\le \sqrt{d}c_0\max(\mathbf{w}_r^{(s,T)})$.

Now, from Lemma \ref{lm1_a}, for any $s_1\in S_{\mathbf{o}_1}$, $p_{\mathbf{o}_1}^{(s_1,T)}=1$ and $\forall \mathbf{x}^{(j)}=\mathbf{o}_1$ for some $j\in[n]$, $G_j^{(s_1,T)}\ge\cfrac{1}{2}$.

Furthermore, $\sum_{r\in[m]}\langle \mathbf{w}_r^{(s,T)},\mathbf{o}_1\rangle=\Omega(mlC_2\sqrt{\cfrac{\log l}{C_p}})$. 

Therefore, for any $(\mathbf{X},y)\sim\mathcal{D}$ such that $\exists j\in[n]$ with $\mathbf{x}^{(j)}=\mathbf{o}_1$, 

$\sum_{s_1\in S_{\mathbf{o}_1}}f_{s_1}^{(T)}(\mathbf{X})=\Omega(\gamma_{o_1}mlC_n\sqrt{\cfrac{\log l}{C_p}})$.

On the other hand, from statement (i) of Lemma \ref{lm1_a}, for any $s_3\in S_{-o_1}$, $p_{o_1}^{(s_3,T)}=0$.

Therefore, for any $(x,y)\sim\mathcal{D}$ such that $\exists j\in[n]$ with $x^{(j)}=o_1$, 

$\sum_{s\in S_{+}}f_{s}^{(T)}(x)=\Omega(\gamma_{o_1}kmlC_2\sqrt{\cfrac{\log l}{C_p}})$.

Again, for any $q\in\mathcal{P}\backslash\{\mathbf{o}_1,\mathbf{o}_2\}$, $\forall s\in[k], \sum_{r\in[m]}\langle \mathbf{w}_r^{(s,T)},\mathbf{q}\rangle=O(mC_n)$, and $\forall s\in S_-, \sum_{r\in[m]}\langle \mathbf{w}_r^{(s,T)},\mathbf{o}_1\rangle=O(mC_n)$.

Therefore,  for any $(\mathbf{X},y)\sim\mathcal{D}$ such that $\exists j\in[n]$ with $\mathbf{x}^{(j)}=\mathbf{o}_1$, $f^{(T)}(\mathbf{X})=\Omega(\gamma_{o_1}kmlC_n\sqrt{\cfrac{\log l}{C_p}})-O(klmC_n)$, which implies for any $(\mathbf{X},y)\sim\mathcal{D}$ such that $\exists j\in[n]$ with $\mathbf{x}^{(j)}=\mathbf{o}_1$, $yf^{(T)}(\mathbf{X})>0$.\\

(\textbf{Condition 1}) Therefore, to ensure that for any $(\mathbf{X},y)\sim\mathcal{D}$ such that $\exists j\in[n]$ with $\mathbf{x}^{(j)}=\mathbf{o}_1$, $yf_A^{(T)}(\mathbf{X})>0$, we need $f^{(T)}(\mathbf{X})-f_A^{(T)}(\mathbf{X})= O(kmlC_n\sqrt{\cfrac{\log l}{C_p}})$.\\

(\textbf{Condition 2}) Similarly, to ensure that for any $(\mathbf{X},y)\sim\mathcal{D}$ such that $\exists j\in[n]$ with $\mathbf{x}^{(j)}=-\mathbf{o}_1$, $yf_A^{(T)}(\mathbf{X})>0$, we need $f^{(T)}(\mathbf{X})-f_A^{(T)}(\mathbf{X})= O(\cfrac{1-\alpha}{\alpha}kmlC_n\sqrt{\cfrac{\log l}{C_p}})$.\\

Now, for any $s_1\in S_{\mathbf{o}_1}$,  $\operatorname{MaxNNScore}^{(s_1)}=O(l^2C_n\sqrt{\cfrac{\log l}{C_p}})$, and $s_3\in S_{-\mathbf{o}_1}$ $\operatorname{MaxNNScore}^{(s_3)}=O(\cfrac{1-\alpha}{\alpha}l^2C_n\sqrt{\cfrac{\log l}{C_p}})$, which implies for any $r\in[m]$, $\max(\mathbf{w}_r^{(s_1,T)})\le O(l^2C_n\sqrt{\cfrac{\log l}{C_p}})$, and $\max(\mathbf{w}_r^{(s_3,T)})\le O(\cfrac{1-\alpha}{\alpha}l^2C_n\sqrt{\cfrac{\log l}{C_p}})$.\\

Similarly, for any $s_2\in S_{\mathbf{o}_2}$,  $\operatorname{MaxNNScore}^{(s_2)}=O(l^2C_n\sqrt{\cfrac{\log l}{C_p}})$, and $s_4\in S_{-\mathbf{o}_2}$ $\operatorname{MaxNNScore}^{(s_4)}=O(\cfrac{1-\alpha}{\alpha}l^2C_n\sqrt{\cfrac{\log l}{C_p}})$, which implies for any $r\in[m]$, $\max(\mathbf{w}_r^{(s_2,T)})\le O(l^2C_n\sqrt{\cfrac{\log l}{C_p}})$, and $\max(\mathbf{w}_r^{(s_4,T)})\le O(\cfrac{1-\alpha}{\alpha}l^2C_n\sqrt{\cfrac{\log l}{C_p}})$.\\

Therefore, to ensure that both condition 1 and condition 2 hold with probability $1-\cfrac{1}{d^2}$ over the randomness of programming noise, we need $c_A=O(\cfrac{\alpha}{1-\alpha}\times\cfrac{1}{l^2\sqrt{d\log(kmld^2)}})$.

Now as Lemma \ref{lm1_a} holds, placing $\gamma$ fraction of the experts with top $\operatorname{MaxNNScore}$ in digital, i.e., we need $c_H=O(\cfrac{1}{l^2\sqrt{d\log(kmld^2)}})$ to ensure condition 1 and 2 hold with probability at least $1-\cfrac{1}{d^2}$. 

\end{proof}

\end{document}